\numberwithin{equation}{section}
\DeclareRobustCommand{\cev}[1]{%
  {\mathpalette\do@cev{#1}}%
}
\newcommand{\do@cev}[2]{%
  \vbox{\offinterlineskip
    \sbox\z@{$\m@th#1 x$}%
    \ialign{##\cr
      \hidewidth\reflectbox{$\m@th#1\vec{}\mkern4mu$}\hidewidth\cr
      \noalign{\kern-\ht\z@}
      $\m@th#1#2$\cr
    }%
  }%
}
\newcommand\lk[1]{{\color{magenta}{#1}}}
\newtheorem{proposition}{Proposition}[section]
\newtheorem{lemma}{Lemma}[section]
\icmltitlerunning{Factored Value Functions for Graph-Based Multi-Agent Reinforcement Learning}
\begin{document}
\twocolumn[
\icmltitle{Factored Value Functions for Graph-Based Multi-Agent Reinforcement Learning}
\icmlsetsymbol{equal}{*}

\begin{icmlauthorlist}
    \icmlauthor{Ahmed Rashwan}{ar}
    \icmlauthor{Keith Briggs}{kb}
    \icmlauthor{Chris Budd}{ar}
    \icmlauthor{Lisa Kreusser}{lk}
\end{icmlauthorlist}

\icmlaffiliation{ar}{University of Bath}
\icmlaffiliation{kb}{BT Research}
\icmlaffiliation{lk}{Monumo}

\icmlcorrespondingauthor{Ahmed Rashwan}{ar3009@bath.ac.uk}
\vskip 0.3in

]
\printAffiliationsAndNotice{}

\begin{abstract}

Credit assignment is a core challenge in multi-agent reinforcement learning (MARL), especially in large-scale systems with structured, local interactions.
Graph-based Markov decision processes (GMDPs) capture such settings via an influence graph, but standard critics are poorly aligned with this structure: global value functions provide weak per-agent learning signals, while existing local constructions can be difficult to estimate and ill-behaved in infinite-horizon settings.
We introduce the \emph{Diffusion Value Function (DVF)}, a factored value function for GMDPs that assigns to each agent a value component by diffusing rewards over the influence graph with temporal discounting and spatial attenuation.
We show that DVF is well-defined, admits a Bellman fixed point, and decomposes the global discounted value via an averaging property.
DVF can be used as a drop-in critic in standard RL algorithms and estimated scalably with graph neural networks.
Building on DVF, we propose \emph{Diffusion A2C (DA2C)} and a sparse message-passing actor, \emph{Learned DropEdge GNN (LD-GNN)}, for learning decentralised algorithms under communication costs.
Across the firefighting benchmark and three distributed computation tasks (vector graph colouring and two transmit power optimisation problems), DA2C consistently outperforms local and global critic baselines, improving average reward by up to $11\%$.

\end{abstract}

\section{Introduction}

Many large-scale multi-agent systems exhibit local interactions that can be represented by a graph, including communication networks \parencite{communication_gmdp}, power and smart grids \parencite{smartgrid_gmdp}, and recommender systems \parencite{recommender_gmdp}.
In these settings, each agent influences only a limited set of neighbours, motivating \emph{Graph-based Markov Decision Processes (GMDPs)} \parencite{GMDP_RL, networked_gmdp}, in which state transitions and rewards factor according to an influence graph.
This structure enables scalable learning and, importantly, generalisation across graph sizes and topologies, a challenge that has recently been highlighted in cooperative graph-structured MARL benchmarks \parencite{sintes2025cognac, GNN_MARL}.

Most MARL algorithms rely on a value function (critic) to estimate returns and provide a learning signal for policy optimisation.
While GMDPs naturally admit factored value functions that exploit locality \parencite{distributed_V}, designing critics that are both informative and scalable remains difficult.
Global value functions aggregate rewards across all agents and can become nearly insensitive to any single agent as the system grows, leading to weak credit assignment \parencite{counterfactual}.
Local alternatives based on discounted sums of influenced rewards are more aligned with the influence graph, but can be difficult to estimate and may behave poorly in infinite-horizon settings \parencite{localDECPOMDP}.
Meanwhile, popular value decomposition methods such as VDN \parencite{vdn}, QMIX \parencite{QMIX}, DCG \parencite{coordination_graph}, and DFQ \parencite{dfq} factorise critic \emph{architectures} to approximate a global value from a global reward, rather than defining a value function that explicitly reflects the GMDP influence structure.
These limitations motivate a critic that is structurally aligned with GMDPs and practical for large-scale reinforcement learning.

To address this gap, we introduce the \emph{Diffusion Value Function (DVF)}, a factored value function tailored to GMDPs.
The DVF models how the effect of an agent's actions diffuses through the influence graph, placing greater weight on rewards that are temporally and spatially close.
This yields a value function that is well-defined for infinite-horizon problems, can be learned with temporal-difference methods, and satisfies a decomposition property: the component-wise mean of the DVF recovers the global discounted value.
We estimate diffusion values using a graph neural network (GNN) \parencite{Graph_networks, GNN_paper}, yielding scalable value estimation and leveraging the locality bias of GNNs \parencite{gnn_bottleneck}.
DVF applies to any cooperative MARL task with local interactions that admit a GMDP-style factorisation, and can be used wherever a value function is required.
In this work we adopt centralised training with decentralised execution (CTDE), but the locality of DVF also suggests a path toward decentralised training with neighbour communication, which we leave to future work.

Since value estimation underpins most MARL algorithms, DVF can be used as a drop-in critic for GMDP-based reinforcement learning.
We instantiate this idea by integrating DVF into Advantage Actor--Critic (A2C) \parencite{adv_estimate}, yielding a method we call \emph{Diffusion A2C (DA2C)}.

We evaluate DA2C on cooperative graph-structured benchmarks, including the firefighting problem from \parencite{localDECPOMDP,sintes2025cognac}, and on learning decentralised message-passing algorithms, where agents exchange information with neighbours to achieve a common objective \parencite{sym_break, distcomp_survey, comnet}.
In many distributed systems, communication is costly (e.g., due to interference or collisions), motivating policies that learn when to communicate in addition to what to compute \parencite{ALOHA, local_comp}.
We therefore introduce a GNN-based actor, the \emph{Learned DropEdge GNN (LD-GNN)}, which jointly learns sparse message passing and local outputs.
Across vector graph colouring and two transmit power optimisation tasks, DA2C consistently outperforms both local and global critic baselines, improving average reward by up to $11\%$.

\paragraph{Contributions.}
\begin{enumerate}
    \item We introduce the \emph{Diffusion Value Function (DVF)}, a factored value function for Graph-based MDPs that models the decay of influence over time and graph distance.
    We establish that DVF is well-defined in infinite-horizon settings (existence and uniqueness as a Bellman fixed point), decomposes the global value via an averaging property, and is aligned with the global objective in the sense that improving all diffusion components improves the global value.

    \item We propose \emph{Diffusion A2C (DA2C)}, an actor--critic algorithm that replaces the standard critic with DVF, yielding a scalable learning signal for cooperative MARL with local interactions and enabling generalisation across graph sizes and topologies.

    \item We introduce the \emph{Learned DropEdge GNN (LD-GNN)}, an actor architecture that learns sparse message passing under distributed-system constraints.
    Across three distributed computation benchmarks, DA2C with LD-GNN consistently outperforms local and global critic baselines, improving average reward by up to $11\%$.
\end{enumerate}

\section{Graph-based Markov decision process}\label{sec:2}

We consider multi-agent Markov decision processes (MDPs) whose structure is described by an
\emph{influence graph} (IG), following \parencite{localrewards, distributed_V}.
A multi-agent MDP is given by $$\mathcal{M}=(\mathcal{V},\mathcal{S},\mathcal{A},T,\mathcal{R}),$$
where $\mathcal{V}=\{1,\dots,n\}$ indexes the agents.
Each agent $i$ has local state space $\mathcal{S}_i$ and action space $\mathcal{A}_i$, and the joint spaces are
$\mathcal{S}=\bigtimes_{i\in\mathcal{V}\cup\{0\}}\mathcal{S}_i$ and $\mathcal{A}=\bigtimes_{i\in\mathcal{V}}\mathcal{A}_i$,
where $\mathcal{S}_0$ collects any global or unobserved state variables.
At each time step $t$, agent~$i$ observes $S_i^t$ and selects $A_i^t$,
after which the next state is drawn from $T(\cdot\,|S^t,A^t)$ and a global reward
$r^t=\mathcal{R}(S^t,A^t)$ is received.
The setting is cooperative: all agents aim to maximise the same return. For any set of agents $N \subseteq \mathcal{V}$ and a collection of variables $\{X_i\}_{i\in\mathcal{V}}$, we write $X_N := (X_i)_{i \in N}$.

The IG $(\mathcal{V},\mathcal{E})$ specifies which agents influence each other.  We assume it is self-connected, i.e.\ $(i,i)\in\mathcal{E}$ for all $i$.
For every agent $i$ we define the in-, out-, and undirected neighbourhoods
$\cev N_i=\{j:(j,i)\in\mathcal{E}\}$,
$\vec N_i=\{j:(i,j)\in\mathcal{E}\}$, and
$N_i=\cev N_i\cup\vec N_i$.
In a graph-based MDP (GMDP), rewards and transitions factorise according to the IG:
\begin{align} \label{decomp}
\begin{split}
\mathcal{R}(S,A)&=n^{-1}\sum_{i\in\mathcal{V}}\mathcal{R}_i(S_{\cev N_i},S_0;A_{\cev N_i}), \\
T(S'|S,A)&=T_0(S_0'|S_0)\prod_{i\in\mathcal{V}}T_i(S_i'|S_{\cev N_i},S_0;A_{\cev N_i}),
\end{split}
\end{align}
so each agent's dynamics depend only on its local neighbourhood. Thus, in a GMDP the global reward is the average of local rewards: 
$r^t = \mathcal R(S^t,A^t) = {n}^{-1}\sum_{i\in\mathcal V} R_i^t$. Here $R_i^t := \mathcal{R}_i(S_{\cev N_i}^t,S_0^t;A_{\cev N_i}^t)$ denotes the local reward at node $i$.

A joint policy is $\pi=(\pi_1,\dots,\pi_n)$, where each $\pi_i$ is the policy of agent $i$, mapping its local observations $\tau_i^t := (S_i^{0:t}, A_i^{0:t-1})$ to a distribution over actions in $\mathcal{A}_i$.

In the general formulation above, agents may have distinct spaces $\mathcal{S}_i$ and $\mathcal{A}_i$. 
In our experiments we consider homogeneous agents, i.e., $\mathcal{S}_i=\mathcal{S}_{\text{loc}}$ and $\mathcal{A}_i=\mathcal{A}_{\text{loc}}$ for all $i$, and the local transition and reward functions share the same parametric form (up to neighbourhood inputs). 
This symmetry allows us to share policy and critic parameters across agents, improving sample efficiency and enabling generalisation to varying graph sizes. This assumption can be relaxed by introducing type-specific policies and using a heterogeneous GNN critic (Appendix~\ref{app:hetero}).

Given discount factor $\gamma\in(0,1)$, we define the global value function\footnote{We shift the discount exponent by one for consistency with the DVF definition \eqref{dvf}; this does not affect the optimisation problem.}
\begin{equation}\label{globalvalue}
    V^{\pi}(S) = \mathbb{E}_\pi\left[\textstyle \sum^\infty_{t = 0} \gamma^{t+1} r^t |S^{0} = S \right].
\end{equation}
The goal is to find a joint policy $\pi^*$ that maximises $V^\pi(S)$ for all initial states~$S$.
For brevity, we omit the superscript $\pi$ when the policy is clear from context.

\section{Factored value functions} \label{sec:dvf}

While the global value function \eqref{globalvalue} provides the correct expected return for a joint policy, it does not decompose across agents and is impractical for large-scale MARL.
Factored value functions aim to assign meaningful agent-wise value components that respect the underlying interaction structure.

\subsection{Local value functions}\label{sec:local_values}

In GMDPs, each agent can influence only a subset of the local rewards.
Recall that $R_i^t = \mathcal{R}_i(S_{\cev{N}_i}^t, S_0^t; A_{\cev{N}_i}^t)$ denotes the reward at node~$i$, and let $\vec N_i^{m}$ be the set of nodes reachable from~$i$ by a directed path of length at most~$m$.
At time~$t$, agent~$i$ can influence only rewards $\{R_j^t : j \in \vec  N_i^{t+1}\}$, motivating the \emph{local value function}
\begin{equation} \label{localv}
    V_i(S) = \mathbb{E}_\pi \left[ \sum_{t=0}^\infty n^{-1} \gamma^{t+1} \sum_{j \in \vec N_i^{t+1}} R_j^t \;\Big|\; S^0 = S \right].
\end{equation}
Under the GMDP factorisation \eqref{decomp}, this decomposition isolates the portion of the global value that agent~$i$ can directly influence (see Appendix \ref{app:local_global} for more details), and similar constructions have been employed in GMDP planning~\parencite{localDECPOMDP, value_function}.

However, the local value has limitations. When the average out-degree exceeds one, the neighbourhoods $N_i^{t+1}$ typically grow exponentially with $t$.  
As a result, local values can increase rapidly with the time horizon, producing high-variance estimates and unstable optimisation even when rewards are bounded. 
Appendix~\ref{app:limitations} formalizes these limitations and Appendix~\ref{ref:unboundedness} shows that local values may diverge on large graphs, whereas the DVF introduced in Section~\ref{ssec:dvf} remains well-defined.

\subsection{The diffusion value function (DVF)} \label{ssec:dvf}

In many practical environments, the influence of an agent's actions decays over both space and time.
This suggests that value should propagate gradually through the interaction graph, rather than depend directly on raw rewards from distant agents.
To address the limitations identified in Appendix~\ref{app:limitations}, we introduce the \emph{diffusion value function} (DVF), an agent-wise value construction that explicitly models this decay of influence.
For additional discussion of value functions as surrogate learning objectives, see Appendix~\ref{app:proposing_vf}.
Intuitively, rewards are diffused across the graph so that nearby agents exert stronger influence than distant ones. The DVF assigns each agent a value component that incorporates both temporal discounting and spatial attenuation of rewards along the interaction graph.

Let $R^t = [R^t_1, \ldots, R^t_n]^\top \in \mathbb R^n$ denote the vector of agent rewards at time~$t$. We define the diffusion value function as\footnote{Not to be confused with the value function of \parencite{mazoure2024value}, which relies on a generative model. We also distinguish \emph{factored value functions} (as defined here using GMDP influence relations) from \emph{factorised critic architectures} used in value decomposition methods; see Appendix~\ref{app:factorisation}.
}
\begin{equation}\label{dvf}
V^\pi_D(S) = \mathbb{E}_\pi\left[\textstyle \sum_{t =0}^\infty \Gamma^{t+1} R^t \;\Big|\; S^{0} = S\right],
\end{equation}
where 
\begin{equation}\label{eq:gamma}
\Gamma = \gamma \mathds{A} D^{-1} \in \mathbb{R}^{n\times n} \quad \text{with} \quad \gamma\in(0,1).
\end{equation}
Here, $\mathds{A}$ is the adjacency matrix of the IG $(\mathcal{V},\,\mathcal{E}),$ and $D$ is the diagonal in-degree matrix with $D_{ii} = \sum_j \mathds{A}_{ji}$.

The DVF is vector-valued, and we interpret the $i$-th component $(V_D)_i$ as the surrogate value function for agent~$i$. Under this formulation, the effect of an action $A^t_i$ propagates through the influence graph according to the diffusion operator~$\Gamma$, so that contributions from rewards decay smoothly with both temporal and spatial distance from $(t,i)$. The standard discount factor~$\gamma$ captures temporal decay, while $\Gamma$ generalizes this by incorporating spatial decay.

This construction is most appropriate in environments where the underlying dynamics exhibit locality, i.e., when actions predominantly influence agents within short graph distances. In such settings, the DVF provides a stable and scalable alternative to the local value~\eqref{localv}, particularly for infinite-horizon problems.

The operator $\Gamma$ induces a diffusion-like propagation of credit across the graph; a formal connection to random-walk diffusion is provided in Appendix~\ref{ref:rel_diffusion}.

\subsection{Theoretical properties of the DVF}\label{sec:theory}

 The DVF satisfies a vector-valued analogue of the Bellman equation
\begin{equation}
V_D(S) = \Gamma \, \mathbb{E}_\pi \big[ R^t + V_D(S^{t+1}) \;|\; S^{t} = S\big],
\label{eq:dvf_bellman}
\end{equation}
which follows directly from \eqref{dvf} for $\Gamma$ in \eqref{eq:gamma}.
We first show existence (convergence of \eqref{dvf}) and uniqueness (Bellman fixed point).

\begin{proposition}[Existence]\label{prop:bounded}
    Assume rewards are bounded and $\gamma\in (0,1)$. Then $V_D(S)$ defined in
\eqref{dvf} converges and is bounded for all states $S$.
\end{proposition}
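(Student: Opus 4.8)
The plan is to control the vector-valued series in \eqref{dvf} by a geometric estimate in the $\ell_1$ norm, exploiting that $\mathds{A}D^{-1}$ is column-stochastic.

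First I would establish that $\Gamma$ in \eqref{eq:gamma} is well-defined with small operator norm. Because the IG is self-connected, every in-degree satisfies $D_{ii}=\sum_j \mathds{A}_{ji}\ge 1$, so $D$ is invertible and $P:=\mathds{A}D^{-1}$ has nonnegative entries. Each column of $P$ sums to $\sum_i \mathds{A}_{ij}/D_{jj}=1$, hence $P$ is column-stochastic and the operator norm induced by the $\ell_1$ vector norm (the maximum absolute column sum) satisfies $\|P\|_1=1$. Therefore $\|\Gamma\|_1=\gamma\in(0,1)$, and submultiplicativity gives $\|\Gamma^{t+1}\|_1\le\gamma^{t+1}$ for all $t\ge 0$.

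Next I would bound the reward vectors and sum the series. Boundedness of rewards gives a constant $R_{\max}<\infty$ with $|R_i^t|\le R_{\max}$ almost surely for all $i$ and $t$, so $\|\mathbb{E}_\pi[R^t\mid S^0=S]\|_1\le \mathbb{E}_\pi[\|R^t\|_1\mid S^0=S]\le nR_{\max}$, uniformly in $t$ and $S$. Writing $V_D^{(T)}(S)=\sum_{t=0}^{T}\Gamma^{t+1}\,\mathbb{E}_\pi[R^t\mid S^0=S]$ for the partial sums, the tail bound
\[
\big\|V_D^{(T')}(S)-V_D^{(T)}(S)\big\|_1\ \le\ nR_{\max}\sum_{t=T+1}^{T'}\gamma^{t+1}\ \xrightarrow[T\to\infty]{}\ 0
\]
shows $\big(V_D^{(T)}(S)\big)_T$ is Cauchy in $\mathbb{R}^n$, hence convergent, so \eqref{dvf} is well-defined; letting $T\to\infty$ in $\|V_D^{(T)}(S)\|_1\le nR_{\max}\sum_{t=0}^{T}\gamma^{t+1}$ yields the uniform bound $\|V_D(S)\|_1\le nR_{\max}\,\gamma/(1-\gamma)$ for every state $S$.

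I do not expect a genuine obstacle here: this is a geometric-series argument. The only point that matters is the choice of norm — working in $\ell_1$ so that column-stochasticity of $\mathds{A}D^{-1}$ makes the per-step contraction factor exactly $\gamma$, independent of the graph size and node degrees; an entrywise or Frobenius estimate would introduce degree-dependent constants and not close as cleanly. The same computation also shows that $\Gamma$ is a $\gamma$-contraction on $(\mathbb{R}^n,\|\cdot\|_1)$, which is what one would invoke next to obtain existence and uniqueness of the Bellman fixed point \eqref{eq:dvf_bellman}.
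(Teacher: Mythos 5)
Your proof is correct and follows essentially the same route as the paper: both rely on the column-stochasticity of $\mathds{A}D^{-1}$ to get $\|\Gamma\|_1=\gamma<1$ and then sum a geometric series (the paper phrases this via the Neumann series $\sum_t\Gamma^t=(I-\Gamma)^{-1}$, while you make the Cauchy/tail estimate and the explicit bound $nR_{\max}\gamma/(1-\gamma)$ explicit). The only addition beyond the paper's argument is your observation that self-connectedness guarantees $D_{ii}\ge 1$, which is a worthwhile check of invertibility but does not change the proof.
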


\begin{proposition}[Uniqueness]\label{prop:uniqueness}
Let $\mathcal{T}^\pi$ be the Bellman operator on $V:\mathcal{S}\to\mathbb{R}^n$,
\[
(\mathcal{T}^\pi V)(S) = \Gamma\, \mathbb{E}_\pi\!\left[R^t + V(S^{t+1}) \mid S^t=S\right].
\]
Since $\|\Gamma\|_1= \max_j \sum_i \Gamma_{ij}=\gamma <1$, $\mathcal{T}^\pi$ is a contraction on the space of bounded $V:\mathcal{S}\to\mathbb{R}^n$ under $\|V\|_\infty= \sup_S \|V(S)\|_1$, and admits a unique fixed point, which coincides with $V_D$.
\end{proposition}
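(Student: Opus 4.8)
The plan is to recognise the statement as a direct instance of the Banach fixed-point theorem and to supply the two ingredients it needs: that $\mathcal{T}^\pi$ maps the space $\mathcal{B}$ of bounded functions $V:\mathcal{S}\to\mathbb{R}^n$ into itself, and that it is a strict contraction in the norm $\|V\|_\infty=\sup_S\|V(S)\|_1$. For the contraction constant I would first pin down $\|\Gamma\|_1$. With the paper's convention $D_{ii}=\sum_j\mathds{A}_{ji}$, column $i$ of $\mathds{A}D^{-1}$ is column $i$ of $\mathds{A}$ divided by its own sum, so each column of $\mathds{A}D^{-1}$ consists of nonnegative entries summing to one; the induced $1$-norm is the maximum absolute column sum, hence $\|\mathds{A}D^{-1}\|_1=1$ and $\|\Gamma\|_1=\gamma<1$.

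Next I would derive the contraction estimate directly from the definition of $\mathcal{T}^\pi$. For $V,W\in\mathcal{B}$ and any state $S$, linearity of the matrix $\Gamma$ and of conditional expectation give $(\mathcal{T}^\pi V-\mathcal{T}^\pi W)(S)=\Gamma\,\mathbb{E}_\pi[\,V(S^{t+1})-W(S^{t+1})\mid S^t=S\,]$. Applying $\|\cdot\|_1$ together with the operator-norm inequality $\|\Gamma x\|_1\le\|\Gamma\|_1\|x\|_1$ bounds this by $\gamma\,\|\mathbb{E}_\pi[\,V(S^{t+1})-W(S^{t+1})\mid S^t=S\,]\|_1$, and convexity of $\|\cdot\|_1$ (Jensen's inequality) pushes the norm inside the expectation, giving at most $\gamma\,\mathbb{E}_\pi[\,\|V(S^{t+1})-W(S^{t+1})\|_1\mid S^t=S\,]\le\gamma\|V-W\|_\infty$. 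Taking the supremum over $S$ yields $\|\mathcal{T}^\pi V-\mathcal{T}^\pi W\|_\infty\le\gamma\|V-W\|_\infty$, so $\mathcal{T}^\pi$ is a $\gamma$-contraction. For the self-map property I would note that when rewards are bounded (the hypothesis of Proposition~\ref{prop:bounded}) and $V$ is bounded, $\mathbb{E}_\pi[R^t+V(S^{t+1})\mid S^t=S]$ is bounded uniformly in $S$, and left-multiplication by the fixed matrix $\Gamma$ preserves boundedness, so $\mathcal{T}^\pi\mathcal{B}\subseteq\mathcal{B}$.

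Finally, since $(\mathcal{B},\|\cdot\|_\infty)$ is complete — bounded real-valued functions on a set are complete under the sup norm, and this extends componentwise to $\mathbb{R}^n$-valued maps — the Banach fixed-point theorem provides a unique fixed point $V^\star\in\mathcal{B}$. By Proposition~\ref{prop:bounded} we have $V_D\in\mathcal{B}$, and by \eqref{eq:dvf_bellman} it satisfies $V_D=\mathcal{T}^\pi V_D$; uniqueness then forces $V^\star=V_D$, which is the claim.

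I do not anticipate a substantive obstacle here: the argument follows the standard contraction/fixed-point template. The only points needing care are the column-sum bookkeeping that makes $\|\Gamma\|_1=\gamma$ under the paper's indexing of $\mathds{A}$ and $D$, and the passage of $\|\cdot\|_1$ through the conditional expectation, which is legitimate because the integrands are bounded (so the conditional expectations exist) and $\|\cdot\|_1$ is convex. One could equivalently avoid Jensen by writing the conditional expectation as an integral/average against the transition law and applying the triangle inequality, which may read a little more transparently.
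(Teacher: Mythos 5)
Your proposal is correct and follows essentially the same route as the paper's proof: establish $\|\Gamma\|_1=\gamma$ from column-stochasticity of $\mathds{A}D^{-1}$, push $\|\cdot\|_1$ through the conditional expectation to get the $\gamma$-contraction in $\|V\|_\infty=\sup_S\|V(S)\|_1$, and invoke the Banach fixed-point theorem, identifying the fixed point with $V_D$ via \eqref{eq:dvf_bellman}. Your explicit treatment of the self-map property and completeness is a minor addition of rigour that the paper leaves implicit, but it does not change the argument.
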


Next, we show that the DVF decomposes the global value $V$ defined in \eqref{globalvalue} into agent-specific components, with the DVF's average across agents recovering the global value. This ensures that $V_D$ is a faithful decomposition of the standard global value objective, enabling agent-specific learning signals while preserving $V(S)$.

\begin{proposition}[Factored value]\label{prop:factored}
Under the GMDP reward factorisation \eqref{decomp}, for any state $S$,
$n^{-1}\mathds{1}^\top V_D(S) = V(S)$.
\end{proposition}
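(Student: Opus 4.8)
The plan is to compute $n^{-1}\mathds{1}^\top V_D(S)$ directly from the series definition~\eqref{dvf} and match it term by term with the series for $V(S)$ in~\eqref{globalvalue}, using the GMDP reward factorisation~\eqref{decomp}. Concretely, I would start from
\[
n^{-1}\mathds{1}^\top V_D(S) = \mathbb{E}_\pi\!\left[\textstyle\sum_{t=0}^\infty n^{-1}\mathds{1}^\top \Gamma^{t+1} R^t \;\Big|\; S^0 = S\right],
\]
where exchanging the sum and expectation is justified by Proposition~\ref{prop:bounded} (absolute convergence from bounded rewards and $\gamma<1$). The crux is the identity $\mathds{1}^\top \Gamma = \gamma\,\mathds{1}^\top$: since $\Gamma = \gamma \mathds{A} D^{-1}$ and the column sums of $\mathds{A} D^{-1}$ are $\sum_i (\mathds{A})_{ij} D_{jj}^{-1} = D_{jj}^{-1}\sum_i \mathds{A}_{ij} = 1$ (using $D_{jj} = \sum_i \mathds{A}_{ij}$, the in-degree), we get $\mathds{1}^\top\Gamma = \gamma\mathds{1}^\top$ and hence $\mathds{1}^\top\Gamma^{t+1} = \gamma^{t+1}\mathds{1}^\top$ by induction.

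Substituting this back gives
\[
n^{-1}\mathds{1}^\top V_D(S) = \mathbb{E}_\pi\!\left[\textstyle\sum_{t=0}^\infty \gamma^{t+1}\, n^{-1}\mathds{1}^\top R^t \;\Big|\; S^0 = S\right]
= \mathbb{E}_\pi\!\left[\textstyle\sum_{t=0}^\infty \gamma^{t+1}\, n^{-1}\sum_{i\in\mathcal{V}} R_i^t \;\Big|\; S^0 = S\right],
\]
and by the GMDP factorisation~\eqref{decomp} we have $n^{-1}\sum_{i\in\mathcal{V}} R_i^t = \mathcal{R}(S^t,A^t) = r^t$, so the right-hand side is exactly $\mathbb{E}_\pi[\sum_{t=0}^\infty \gamma^{t+1} r^t \mid S^0 = S] = V(S)$, as defined in~\eqref{globalvalue}. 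This completes the argument.

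I do not expect a serious obstacle here; the only subtlety is bookkeeping. The one point to state carefully is that the column-sum identity for $\mathds{A} D^{-1}$ requires every node to have positive in-degree, which holds because the influence graph is assumed self-connected ($(i,i)\in\mathcal{E}$ for all $i$), so $D_{jj}\ge 1$ and $D$ is invertible. The interchange of expectation and the infinite sum should be flagged as following from dominated convergence via the uniform bound on $\|\Gamma^{t+1} R^t\|_1 \le \gamma^{t+1}\|R^t\|_1$ already used in Proposition~\ref{prop:bounded}.
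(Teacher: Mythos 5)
Your proof is correct and follows essentially the same route as the paper's: both rest on the column-stochasticity of $\mathds{A}D^{-1}$ (the paper's Lemma~\ref{lem:prop}), giving $\mathds{1}^\top\Gamma^{t+1}=\gamma^{t+1}\mathds{1}^\top$, and then identify $n^{-1}\mathds{1}^\top R^t$ with $r^t$ via \eqref{decomp}. Your extra remarks on the sum--expectation interchange and on $D_{jj}\ge 1$ from self-connectedness are sound but not needed beyond what the paper already assumes.
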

Finally, to connect improvements in DVF components to the global objective, we have:

\begin{proposition}[Policy alignment]\label{prop:alignment}
Let $\pi$ and $\pi'$ be two joint policies with diffusion values $V_D^\pi$ and $V_D^{\pi'}$. Under the GMDP reward factorisation \eqref{decomp}, for any state $S$, if
$(V_D^{\pi'}(S))_i \ge (V_D^\pi(S))_i$ for all agents $i$,
with strict inequality for at least one agent, then the global value satisfies
$V^{\pi'}(S) > V^\pi(S)$.
\end{proposition}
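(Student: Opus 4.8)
The plan is to derive the statement directly from the factored-value identity of Proposition~\ref{prop:factored}, which holds for \emph{any} joint policy. The key observation is that $V^{\pi'}(S)$ and $V^{\pi}(S)$ are, up to the common positive factor $n^{-1}$, simply the sums of the components of $V_D^{\pi'}(S)$ and $V_D^{\pi}(S)$ respectively; so a componentwise domination of the diffusion values transfers to a domination of the global values.

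First I would invoke Proposition~\ref{prop:factored} separately for $\pi$ and for $\pi'$, obtaining
\[
V^{\pi}(S) = n^{-1}\mathds{1}^\top V_D^{\pi}(S) = n^{-1}\sum_{i\in\mathcal V}\bigl(V_D^{\pi}(S)\bigr)_i,
\qquad
V^{\pi'}(S) = n^{-1}\sum_{i\in\mathcal V}\bigl(V_D^{\pi'}(S)\bigr)_i .
\]
Here it matters that the hypothesis of Proposition~\ref{prop:factored} is the GMDP reward factorisation~\eqref{decomp}, which does not reference the policy, so the identity is available for both policies simultaneously.

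Next I would sum the componentwise hypothesis $\bigl(V_D^{\pi'}(S)\bigr)_i \ge \bigl(V_D^{\pi}(S)\bigr)_i$ over $i\in\mathcal V$. Since every term on the left dominates the corresponding term on the right and at least one of these inequalities is strict, the sum on the left strictly exceeds the sum on the right; multiplying by $n^{-1}>0$ and substituting the two identities above yields $V^{\pi'}(S) > V^{\pi}(S)$, which is exactly the claim.

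I do not anticipate a substantive obstacle: the argument reduces to Proposition~\ref{prop:factored} plus the elementary fact that a finite sum of reals is strictly monotone when one summand strictly increases and none decrease. The only point deserving care is to confirm that Proposition~\ref{prop:factored} is genuinely policy-agnostic — that its proof uses only the factorisation~\eqref{decomp} and the definitions~\eqref{dvf} and~\eqref{globalvalue}, all of which hold verbatim for an arbitrary $\pi$ — so that it can legitimately be applied to both $\pi$ and $\pi'$ without introducing extra assumptions.
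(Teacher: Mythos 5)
Your proposal is correct and matches the paper's own proof essentially verbatim: both apply Proposition~\ref{prop:factored} to each of $\pi$ and $\pi'$ and then conclude from the strict componentwise domination that $n^{-1}\mathds{1}^\top V_D^{\pi'}(S) > n^{-1}\mathds{1}^\top V_D^{\pi}(S)$. Your extra remark that Proposition~\ref{prop:factored} is policy-agnostic is a sound (if implicit in the paper) point of care.
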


These properties justify using the DVF for comparative policy evaluation and improvement. Although DVF-based optimisation does not in general guarantee global optimality, our empirical findings in Section~\ref{sec:exp} (Figure~\ref{sweeps}) suggest that the DVF provides a useful inductive bias for learning in graph-structured multi-agent systems.
Proofs are provided in Appendix~\ref{theory}.

\subsection{Estimating the DVF}\label{ssec:estimation}

\paragraph{CTDE setting.}
We adopt centralised training with decentralised execution (CTDE): during execution each agent selects actions using only its local information $\tau_i$, while during training the critic may condition on global state and reward information.

\paragraph{TD learning.}
We learn the diffusion value function $V_D$ in \eqref{eq:dvf_bellman} using standard temporal-difference (TD) methods \parencite{RLbible}. 
Let $V_{D_\phi}:\mathcal{S}\to\mathbb{R}^n$ (one component per agent) denote a parametric approximation of $V_D$.
Given a transition $(S^t, R^t, S^{t+1})$, we define the diffusion TD error
\begin{equation}\label{eq:td_error}
    \delta^t = \Gamma \bigl[ R^t + V_{D_\phi}(S^{t+1}) \bigr] - V_{D_\phi}(S^t),
\end{equation}
and minimise $n^{-1}\|\delta^t\|^2$ using a semi-gradient update (i.e., stopping gradients through the target term).

\paragraph{Decentralisability.}

In our experiments, we take $V_{D_\phi}$ to be a shared GNN critic which estimates each component of the DVF by message-passing between neighbours. GNNs are known to have a bias towards short-range relations \parencite{gnn_bottleneck}, which aligns well with the locality of diffusion values. We describe centralised and decentralised training schemes in Appendix~\ref{sec:training_schemes}, but focus empirically on CTDE to avoid confounds from decentralised optimisation and communication constraints.

\subsection{Limitations}
Our approach assumes a known, time-static influence graph and a GMDP factorisation \eqref{decomp}. Tasks with unknown or time-varying interaction structure, or with dense/global couplings that violate \eqref{decomp}, may reduce the usefulness of DVF components and weaken our decomposition/alignment guarantees in Section \ref{sec:theory}. Moreover, the diffusion operator induces spatial discounting, attenuating reward information from distant agents. While this improves scalability and encourages locality in large systems, it can limit performance in domains requiring long-range coordination.

\section{The diffusion advantage actor--critic (DA2C) algorithm}\label{sec:da2c}

Actor--critic methods typically learn a policy (actor) together with a value function (critic) used to form low-variance policy-gradient updates. This is achieved by updating policies using an advantage estimator, often computed from a value function via TD learning. 

In graph-based MDPs, the DVF provides a vector-valued critic $V_D:\mathcal{S}\to\mathbb{R}^n$ that yields agent-specific learning signals.
We now introduce the diffusion advantage actor--critic (DA2C) algorithm, which integrates DVF-based advantage estimation into the standard advantage actor--critic (A2C) framework by replacing the standard scalar critic with the DVF critic $V_{D_\phi}$ and using diffusion TD errors as agent-specific advantages.

\paragraph{Diffusion advantages.}
Given a transition $(S^t, R^t, S^{t+1})$, let $\delta^t \in \mathbb{R}^n$ denote the diffusion TD error from \eqref{eq:td_error}. We use $\delta^t$ as a 1-step diffusion advantage estimator.
More generally, a $W$-step diffusion advantage estimate is
\begin{equation}\label{eq:diff_nstep}
\hat{G}^t_{D_\phi} = \sum_{k=0}^{W-1} \gamma^{k} R^{t+k} + \gamma^{W} V_{D_\phi}(S^{t+W}) - V_{D_\phi}(S^t),
\end{equation}
obtained by unrolling \eqref{eq:dvf_bellman} for $W$ steps and bootstrapping with $V_{D_\phi}(S^{t+W})$. For $W=1$, \eqref{eq:diff_nstep} reduces to the diffusion TD error $\delta^t$. DA2C uses $(\hat{G}^t_{D_\phi})_i$ as the advantage for agent $i$.

\paragraph{Policy update.}
Under CTDE, each agent $i$ executes a policy $\pi_{\theta_i}(A_i^t \mid \tau_i^t)$ using only its local history $\tau_i^t$.
DA2C updates each policy using the stochastic diffusion policy-gradient estimator
\begin{equation*}
g_i^t = \nabla_{\theta_i}\log \pi_{\theta_i}(A_i^t\mid\tau_i^t)\,(\hat{G}^t_{D_\phi})_i .
\end{equation*}
Although $(\hat{G}^t_{D_\phi})_i$ differs from the canonical global advantage, Propositions~\ref{prop:factored} and~\ref{prop:alignment} imply that the two quantities are aligned. Compared to the global \eqref{globalvalue} and local \eqref{localv} value functions, the DVF is easier to estimate and typically better captures each agent's reward contribution. As a result, it tends to yield better estimates of the \emph{counterfactual} advantage, in the sense of \parencite{counterfactual}.

DA2C alternates between updating the DVF critic by minimising $\lVert \delta^t \rVert^2$ (Section~\ref{ssec:estimation}) and updating the actor using diffusion advantages. 
Full training details are provided in Appendix~\ref{training}.

\section{Applications}\label{sec:app}
We evaluate DA2C on three representative graph-structured tasks: a classic cooperative MARL benchmark (Firefighting) and two distributed computation/communication problems (vector graph colouring and transmit power control). All tasks can be described or well-approximated as GMDPs under an appropriate influence graph.

\subsection{Broader applicability}
Graph-based MDPs arise naturally in cooperative multi-agent and networked control settings where interactions are local and can be captured by an influence graph. Examples include traffic signal control, power and energy systems, robot swarms, and network resource allocation, where rewards often decompose into local objectives and dynamics depend primarily on neighbourhood state--action variables, consistent with \eqref{decomp}. DA2C provides a scalable actor--critic template for such systems by combining decentralized policies with diffusion-based value decomposition.

\subsubsection{Application 1: Firefighting (cooperative MARL benchmark)}
We evaluate DA2C on a generalisation of the firefighting environment from \parencite{localDECPOMDP}, where a team of firefighters cooperatively suppresses fires spreading across a set of homes. The environment is defined on a bipartite firefighter--home graph; two firefighters influence each other if they can act on a common home, inducing a natural influence graph. At each step, firefighters move to homes to reduce fire levels, while fires spread stochastically between adjacent homes. The reward is the negative average fire level across homes. Full task details and the corresponding GMDP formulation are provided in Appendix~\ref{sec:firefight_setup}.

\subsection{Distributed computation and communication tasks}\label{distcomp}
Distributed algorithms involve a network of processors that exchange messages to achieve a common objective, with examples ranging from distributed estimation \parencite{randgossip,gossip} to distributed graph colouring \parencite{sym_break,graphcol}, and packet routing \parencite{RLrouting}. We model such tasks as multi-agent MDPs in the LOCAL model \parencite{LOCAL_model}, augmented with explicit communication costs to capture settings where message passing is constrained (e.g., due to energy or bandwidth limitations) \parencite{ALOHA}.

\paragraph{MDP formulation.}
Consider a set of homogeneous, synchronized processors $\mathcal{V}$ connected by a communication graph $(\mathcal{V},\mathcal{E})$. At time $t$, each node $i\in\mathcal{V}$ receives a local observation $O_i^t$ correlated with the global state $S^t$. Node $i$ then selects a subset of out-neighbours $C_i^t \subseteq \vec{N}_i$ to which it transmits messages, and receives messages from its in-neighbours $\{j\in\cev{N}_i : i\in C_j^t\}$. After communication, node $i$ produces a task-dependent local output $Y_i^t$ (e.g., an estimate, a colour, or a transmit-power decision). The system receives a global reward
$r^t=\mathcal{R}(Y^t,C^t;S^t)$ and transitions to the next state
$S^{t+1}\sim T(\cdot\mid Y^t,C^t,S^t)$.
We allow the reward and transition dynamics to depend explicitly on the local outputs $Y^t$ and the communication pattern $C^t$, enabling environments where excessive communication is discouraged via a penalty term. A shared policy $\pi$ therefore specifies a distributed algorithm: it determines when and what to communicate and how to compute outputs $Y^t$ from local histories, with the goal of maximising the discounted return \eqref{globalvalue}.

\subsubsection{Application 2: Vector graph colouring}
Decentralized graph colouring is a classical distributed computation problem \parencite{graphcol}. We study a vector-colouring variant in which each node outputs a binary colour vector and iteratively updates it based on local information and neighbour messages. The reward encourages selecting many colours while penalising conflicts with adjacent nodes, controlled by the conflict penalty $p_m$. This task admits a GMDP formulation; details are provided in Appendix~\ref{sec:graph_col}. We evaluate on two graph families (Appendix~\ref{sec:gen_details}) and show that DA2C outperforms all baselines across both topologies (Section~\ref{sec:exp} and Appendix~\ref{app:barabasi}).

\subsubsection{Application 3: Transmit power control}
We study decentralized wireless communication tasks with service-quality and energy-efficiency objectives, where rewards depend on channel capacity and interference between nearby transmitter--receiver pairs. Agents may exchange messages at a cost controlled by $p_m$, capturing energy or bandwidth constraints. A direct GMDP formulation yields a dense influence graph due to interference coupling; we therefore use a sparse communication-edge GMDP approximation that models communication decisions while preserving the reward structure. Full details of the environment and the approximate GMDP formulation are provided in Appendix~\ref{sec:transmitpower_setting}.

\section{Learned DropEdge GNN (LD-GNN) for learning distributed algorithms}\label{sec:sgnn}

To represent a learned distributed algorithm, the policy must decide \emph{(i)} which messages to send and \emph{(ii)} what local output to produce.
We therefore parameterise the joint policy $\pi_\theta$ using a \emph{message-passing policy} $\lambda_\theta$ that samples message passes $C^t$ and an \emph{output policy} $\psi_\theta$ that produces node outputs $Y^t$.
Graph neural networks (GNNs) are a natural choice, as they are built around neighbourhood message passing and align with the computation model in Section~\ref{distcomp}.
However, standard GNN formulations typically perform message aggregation over a fixed candidate neighbourhood (e.g., all adjacent nodes), 
whereas we treat communication decisions under the distributed computation constraint as a learnable component (Appendix~\ref{app:gnns}).

\paragraph{LD-GNN overview.}
Motivated by DropEdge \parencite{dropedge}, we propose the \emph{Learned DropEdge GNN (LD-GNN)}, a recurrent GNN architecture that learns when to communicate by assigning locally computed edge-activation probabilities. At each time step $t$, node $i$ computes a local embedding $I_i^t$ from its observation $O_i^t$ and internal memory $X_i^t$, and samples a subset of neighbours $C_i^t \subseteq \vec N_i$ by independently including each $j\in\vec N_i$ with probability $\lambda_\theta(I_i^t,E_{ij}^t)$, where $E_{ij}^t$ denotes the local edge memory. A single-layer GNN is then applied on the induced subgraph to aggregate received messages, after which gated recurrent units (GRUs) produce the next node and edge memories, and the output policy $\psi_\theta$ samples the node output $Y_i^t$. By construction, both $C_i^t$ and $Y_i^t$ depend only on information available locally at node $i$, so LD-GNN respects the distributed computation restriction. Figure~\ref{fig:ldgnn} provides an overview; full architectural details are given in Appendix~\ref{ld-gnn_model}. For distributed computation problems that admit a GMDP formulation, we train LD-GNN as the actor in DA2C, alongside a GNN diffusion-value critic $V_{D_\phi}$ (Section~\ref{ssec:estimation}).
This yields a flexible method that generalises across graph sizes and topologies.

\begin{figure}[ht]
    \centering
    \includegraphics[width=0.8\linewidth]{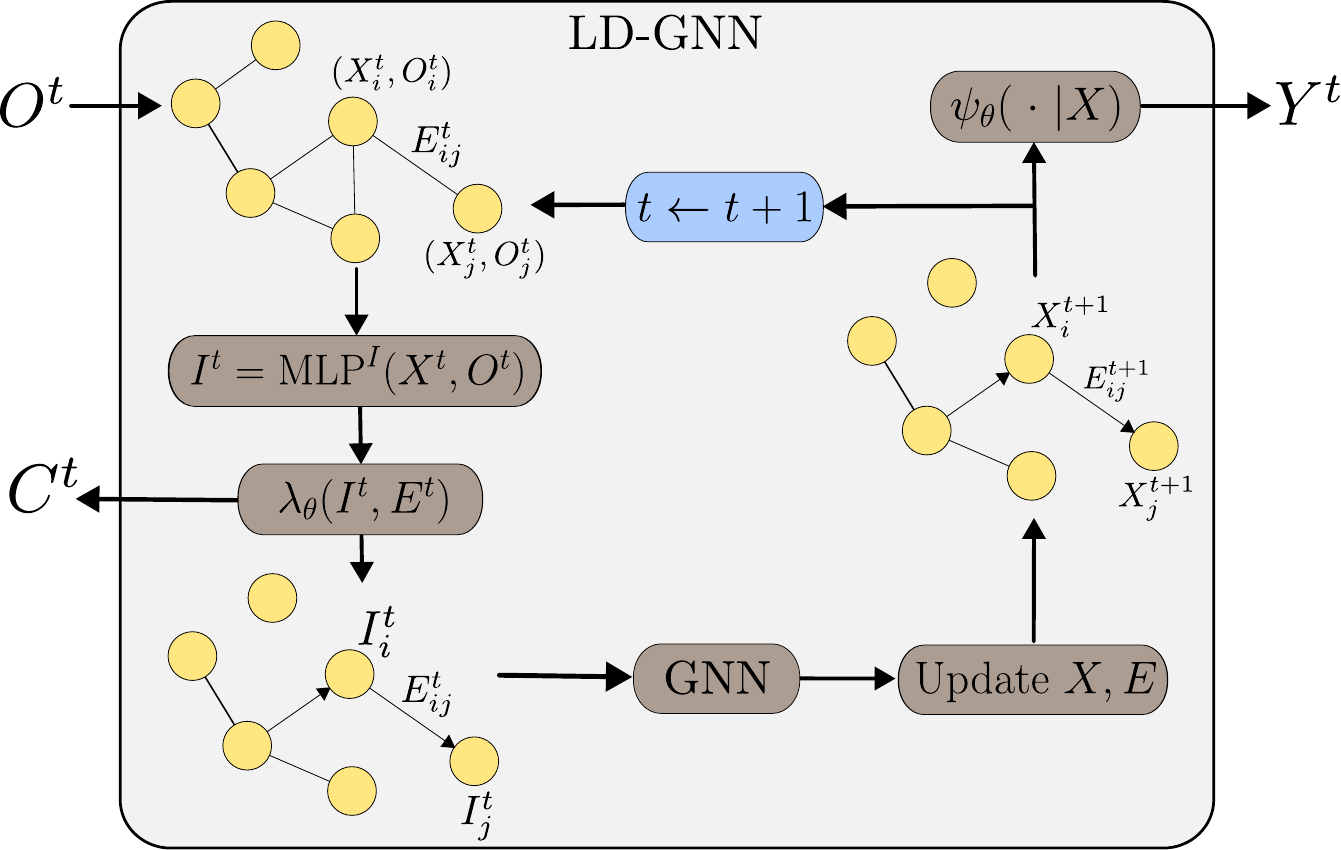}
    \caption{The Learned DropEdge GNN (LD-GNN) architecture. Given node and edge memory states $(X,E)$ and observations $O^t$, the policy $\lambda_\theta$ samples neighbour sets $C^t$, inducing a subgraph on which a GNN aggregates messages. GRUs update $(X,E)$, and $\psi_\theta$ samples outputs $Y^t$.}
    \label{fig:ldgnn}
\end{figure}

\section{Experiments}\label{sec:exp}

To evaluate DA2C, we compare it against a range of baselines, both for cooperative graph-structured benchmarks such as the firefighting problem and for training LD-GNN policies on the distributed computation tasks introduced in Section~\ref{sec:app}, across multiple settings of the task-dependent message-passing penalty $p_m$.
The role and range of $p_m$ differ by application (Appendix~\ref{sec:gen_details}).
Overall, across all tasks and penalty settings, we evaluate over $30$ distinct environment configurations. 

\subsection{Training and baselines}

For each task (and penalty $p_m$ where applicable), we train and evaluate policies on independently sampled random graph instances.
Graph sizes range from tens of nodes (e.g., transmit power control) to several thousand nodes (e.g., graph colouring), and OOD evaluation includes graphs with up to $10{,}000$ nodes (Appendix~\ref{app:ood}).
Further details on graph generation and task settings are provided in Appendix~\ref{sec:gen_details}.
Appendix~\ref{app:dcg} discusses coordination-graph methods (e.g., DCG) and explains why their standard benchmark suites do not align with the GMDP setting considered here.
All experiments are run on a single machine with a GTX 1080 GPU, and we report results over five independent model seeds for each algorithm and environment configuration.

\begin{figure*}[ht]
    \centering
    \begin{subfigure}[t]{0.35\linewidth}
        \centering
        \includegraphics[width=\linewidth]{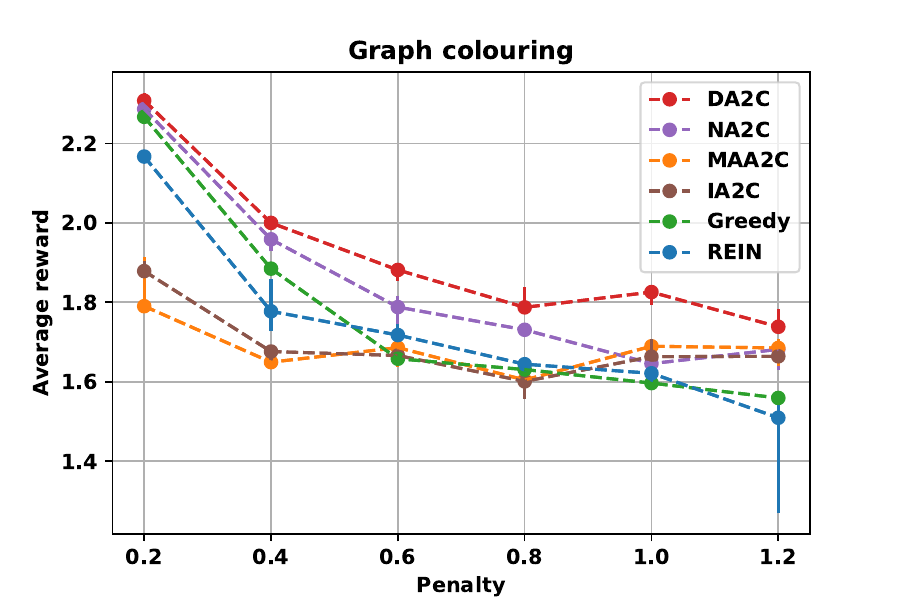}

    \end{subfigure} \hspace{-0.6cm}
    \begin{subfigure}[t]{0.35\linewidth}
        \centering
        \includegraphics[width=\linewidth]{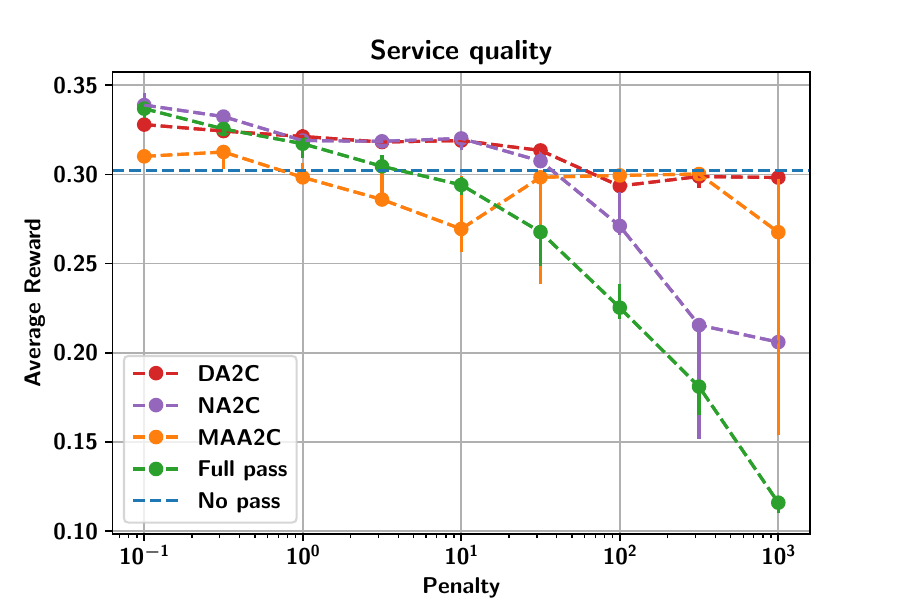}

    \end{subfigure}  \hspace{-0.6cm}
    \begin{subfigure}[t]{0.35\linewidth}
        \centering
        \includegraphics[width=\linewidth]{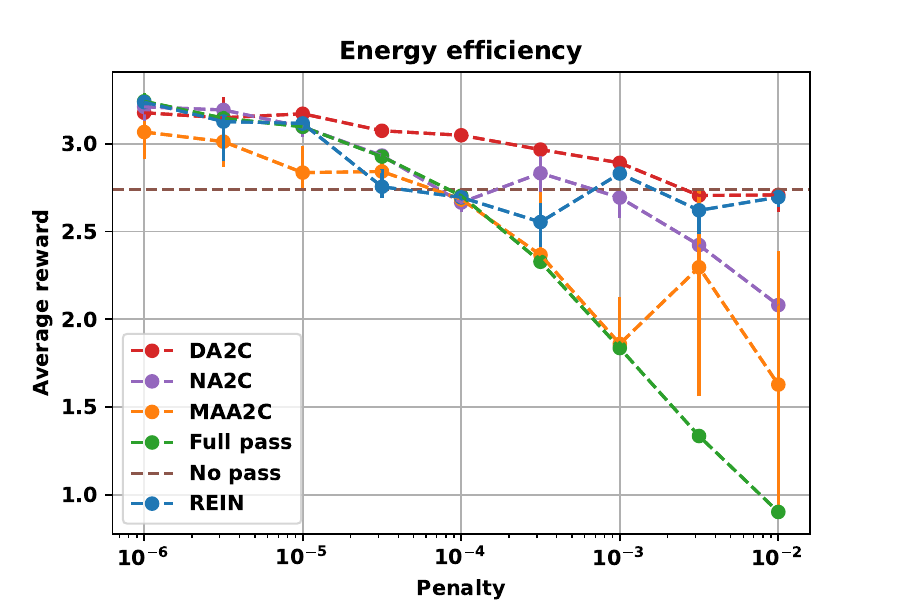}

    \end{subfigure}
    \caption{Test performance as a function of the message-passing penalty $p_m$ for the three tasks (graph colouring, service quality and energy efficiency). 
Error bars indicate the lower and upper quartiles across runs. 
We omit IA2C from the service quality task due to consistently low rewards, and from the energy efficiency task because it coincides with the no-pass baseline.}

    \label{sweeps}
\end{figure*}

\begin{figure*}[ht]
    \centering
        \begin{subfigure}[t]{0.34\linewidth}
        \centering
        \includegraphics[width=\linewidth]{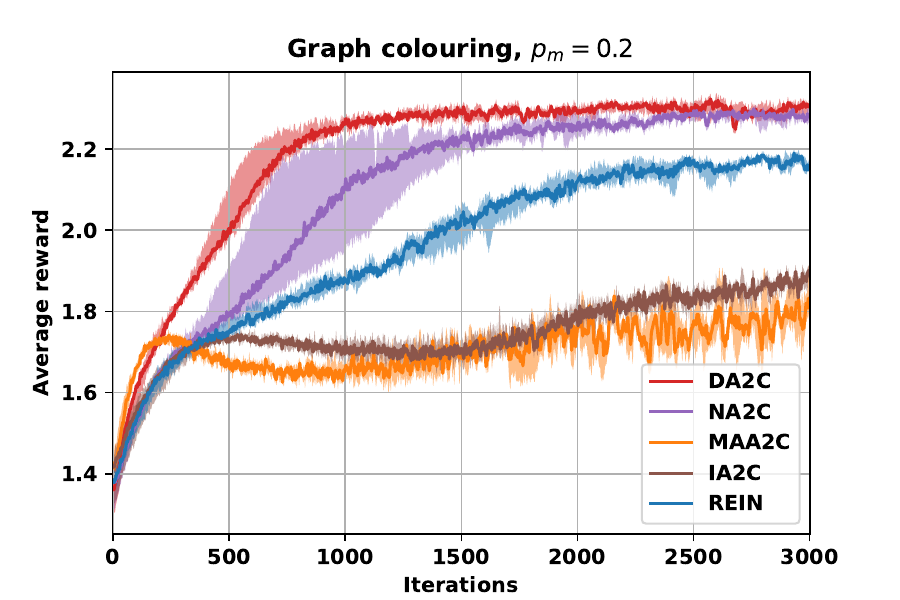}
    \end{subfigure} \hspace{-0.35cm}
    \begin{subfigure}[t]{0.34\linewidth}
        \centering
        \includegraphics[width=\linewidth]{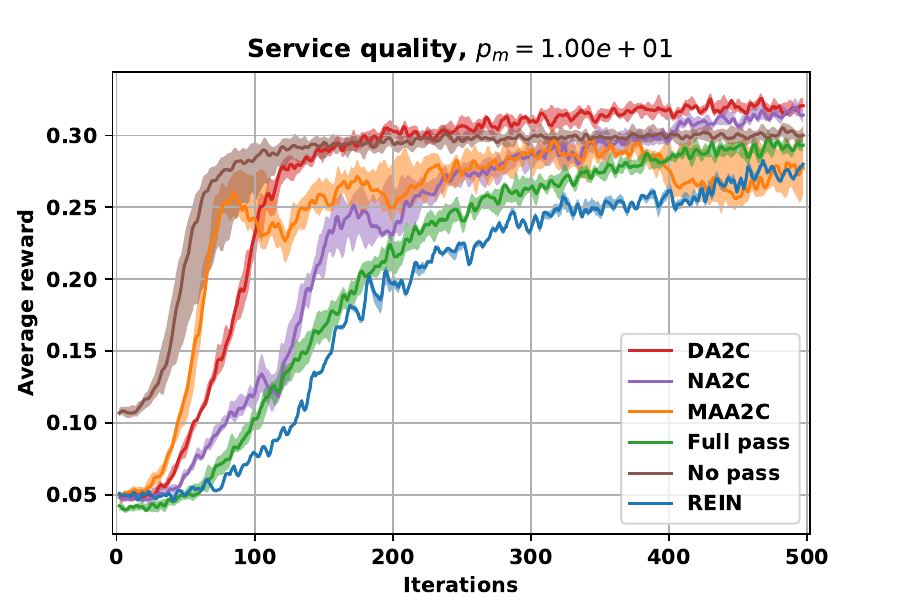}
    \end{subfigure}\hspace{-0.35cm}
    \begin{subfigure}[t]{0.34\linewidth}
        \centering
        \includegraphics[width=\linewidth]{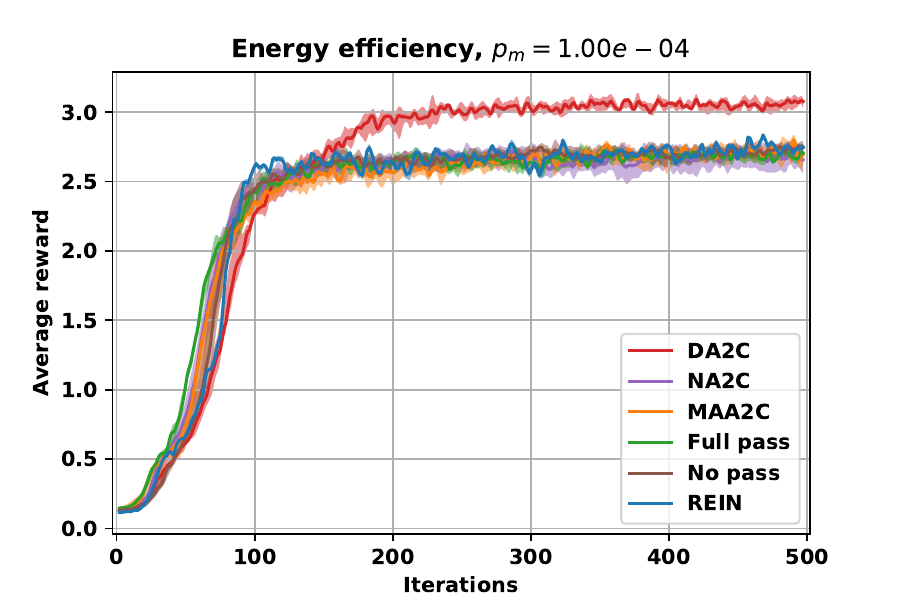}
    \end{subfigure} \\

    \begin{subfigure}[t]{0.34\linewidth}
        \centering
        \includegraphics[width=\linewidth]{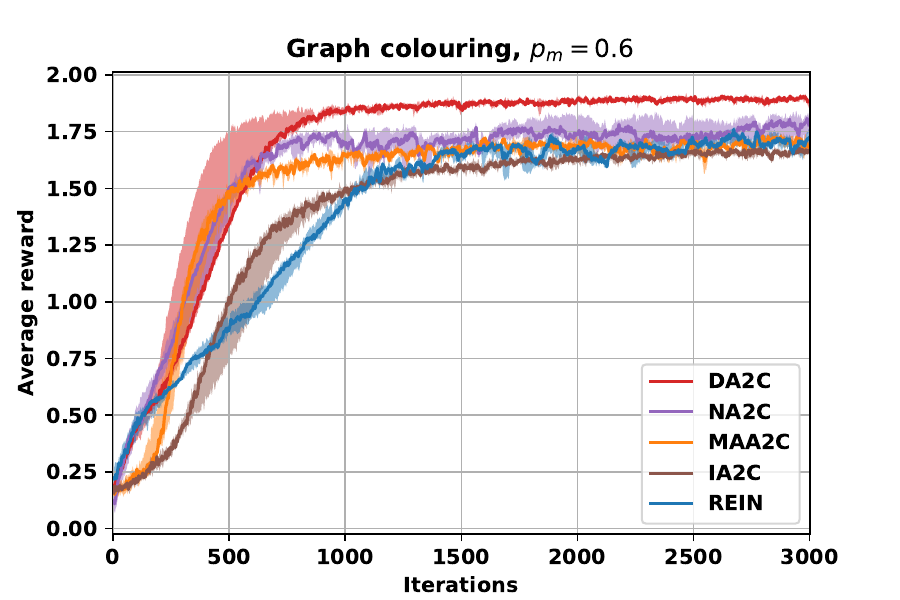}
    \end{subfigure}\hspace{-0.35cm}
    \begin{subfigure}[t]{0.34\linewidth}
        \centering
        \includegraphics[width=\linewidth]{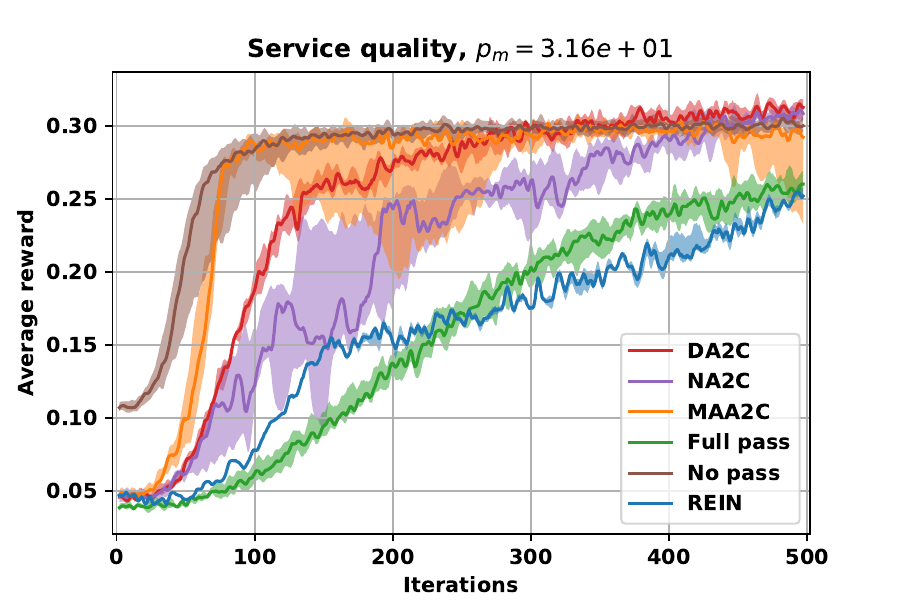}
    \end{subfigure}\hspace{-0.35cm}
    \begin{subfigure}[t]{0.34\linewidth}
        \centering
        \includegraphics[width=\linewidth]{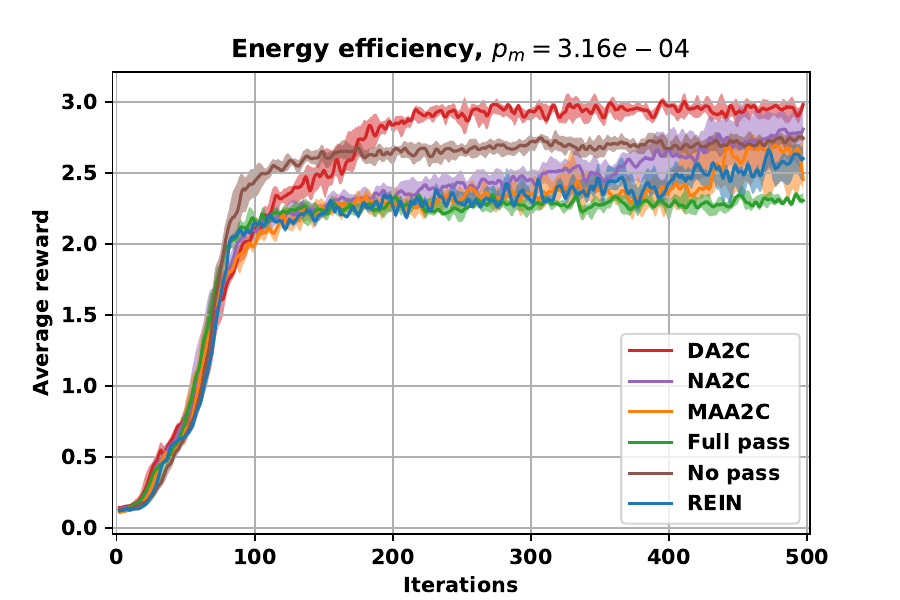}
    \end{subfigure}

    \caption{Training curves for different message-passing penalties $p_m$ on the three tasks (graph colouring, service quality and energy efficiency). Error bars indicate the lower and upper quartiles across runs. Curves are smoothed with a moving average for visual clarity.}  \label{training_curves}
\end{figure*}
\vspace{-0.1cm}

\paragraph{Baselines.}
To isolate the effect of the diffusion critic, we benchmark DA2C against variants of advantage actor--critic (A2C) that share the \emph{same actor} and differ only in the critic definition.
Specifically, we consider:
\begin{itemize}[leftmargin=*, itemsep=0pt]
    \item \textbf{REINFORCE (REIN):} no critic network (policy gradient without baseline);
    \item \textbf{Independent A2C (IA2C):} per-agent critic based on local rewards,
    \[
    V_i(S)=\mathbb{E}_\pi\Big[\textstyle \sum_{t=0}^\infty \gamma^{t} R_i^t \,\Big|\, S^0=S\Big];
    \]
    \item \textbf{Neighbourhood A2C (NA2C):} per-agent critic based on rewards in a fixed neighbourhood $\vec N_i$,
    \[
    V_i(S)=\mathbb{E}_\pi\Big[\textstyle \sum_{t=0}^\infty \gamma^{t} \sum_{j\in \vec N_i} R_j^t \,\Big|\, S^0=S\Big];
    \]
    \item \textbf{Multi-agent A2C (MAA2C):} global critic based on the total reward,
    \[
    V(S)=\mathbb{E}_\pi\Big[\textstyle \sum_{t=0}^\infty \gamma^{t} r^t \,\Big|\, S^0=S\Big].
    \]
\end{itemize}

All critic networks (when present) use similar GNN architectures and are trained with temporal-difference (TD) methods.
Since all A2C variants share an identical actor and comparable computational cost, differences primarily reflect the effect of the critic structure.
DA2C and NA2C require an additional sparse matrix--vector product in the TD update, but this overhead is negligible compared to the GNN forward pass.
DA2C training and LD-GNN architecture details are provided in Appendix~\ref{training} and Appendix~\ref{architecture}.

\paragraph{Task-specific baselines.}
For the transmit power control tasks, which include message-passing penalties, we additionally report two fixed-communication baselines: \emph{no-pass} ($\lambda\equiv 0$) and \emph{full-pass} ($\lambda\equiv 1$).
For graph colouring, we include a classical greedy distributed baseline described in Appendix~\ref{sec:classical}.

\subsection{Results}

\begin{table}[ht]
    \centering
        \centering
        \captionsetup{justification=centering}
        \caption{Average fire level for each method on the firefighting task (lower is better). Values show mean $\pm$ standard error over six model seeds and 100 environment instances.}
        \begin{tabular}[t]{l | c} \toprule
        Method & Fire level\\ \midrule
        DA2C & $\mathbf{2.637 \pm 0.002}$ \\
        NA2C & $2.713 \pm 0.007$\\
        MAA2C & $2.983 \pm 0.006$ \\
        IA2C & $2.964 \pm 0.003$ \\
        REIN & $3.01 \pm 0.01$ \\
        \bottomrule
    \end{tabular} 
    \label{tab:firefight}
\end{table}

Test performance is summarised in Table~\ref{tab:firefight} and Figure~\ref{sweeps}, and training curves are shown in Figures~\ref{fig:fire_train} and \ref{training_curves} for the firefighting and distributed computation tasks, respectively.
Across all benchmarks and (where applicable) all penalty settings, DA2C achieves the strongest and most consistent performance.
In particular, the interquartile bands for DA2C are narrow in both the training curves and penalty sweeps, indicating stable learning and robust performance across penalty values $p_m$.

In contrast, MAA2C, which directly estimates the global value $V$ in \eqref{globalvalue}, exhibits highly inconsistent performance and often underperforms the other A2C variants.
This indicates that global critics provide a weak or noisy learning signal in our large-scale settings, whereas factored value functions yield more reliable agent-wise advantage estimates.
Notably, DA2C remains effective even when the environment only approximately satisfies the GMDP assumptions, as in our communication tasks.

Greedy baselines such as IA2C and \emph{no-pass} optimise each agent with respect to its own reward, ignoring the effect of its actions on neighbouring agents.
As expected, these baselines perform poorly for most penalty settings, since they fail to coordinate.
However, in the limit $p_m \rightarrow \infty$, communication becomes prohibitively expensive and local greedy behaviour becomes optimal; accordingly, greedy methods approach the best achievable performance in this regime for the distributed computation tasks.

Finally, NA2C and DA2C incorporate local value signals that reflect neighbourhood influence, encouraging cooperative behaviour while remaining easier to estimate than global critics.
NA2C accounts only for direct neighbours, whereas DA2C captures how influence propagates through the graph via diffusion.
Both methods perform well across all tasks, with DA2C consistently outperforming NA2C, suggesting that diffusion-based credit assignment provides a favourable compromise between the scalability of local critics and the coordination capacity of global objectives.

\paragraph{Ablations.}
Table~\ref{tab:ablation} evaluates the contribution of (i) diffusion-based value estimation and (ii) learned communication decisions in the message-penalised tasks.
Replacing the DVF critic with REINFORCE (i.e., removing the critic) consistently reduces performance, indicating that diffusion-based credit assignment is important for stable learning.
For the transmit-power tasks, we additionally compare against fixed-communication baselines (\emph{full-pass}/\emph{no-pass}; see Table~\ref{tab:ablation} caption).
Both fixed-communication baselines underperform DA2C, demonstrating the benefit of learning sparse communication under message-passing penalties.
Overall, the results suggest that both the DVF critic and learned communication contribute to performance gains.

\begin{table}[h]
    \centering
        \centering
        \captionsetup{justification=centering}
        \caption{Ablations averaged over all message-passing penalties $p_m$ (mean reward; higher is better). For \emph{full-pass}/\emph{no-pass}, $\lambda$ is fixed and only $\psi_\theta$ is learned. N/A indicates not applicable.}

        \begin{tabular}[t]{l | c c c} \toprule
         \multirow{2}*{Method}& \multicolumn{3}{c}{Application } \\
         & GC & SQ & EE\\ \midrule
        DA2C (DVF critic) & \textbf{2.2} & \textbf{0.32} & \textbf{3.0}\\
        REINFORCE (no critic) & 2.0 & 0.26 & 2.8\\
        Full-pass ($\lambda \equiv 1$) & N/A & 0.30 & 2.7\\
        No-pass ($\lambda \equiv 0$) & N/A & 0.27 & 2.4\\
        \bottomrule
    \end{tabular} 
    \label{tab:ablation}
\end{table}

\paragraph{Robustness to graph structure.}
DA2C performs consistently across the graph families considered in our main experiments, including Erd\H{o}s--R\'enyi graphs (graph colouring) and random geometric graphs (transmit power), with low variability across runs (Figure~\ref{sweeps}).
Appendix~\ref{app:barabasi} further evaluates DA2C on Barab\'asi--Albert graphs, where it continues to outperform all baselines.
These results suggest that DVF-based training is robust across graph topologies, while providing the largest gains in sparser settings where locality and limited influence overlap can be better exploited.
In the limiting case of a fully connected graph, the DVF reduces to the standard global value function.

\paragraph{Generalisation.}
Appendix~\ref{app:ood} evaluates out-of-distribution generalisation by training on $5{,}000$-node ER graphs and testing on larger ER graphs ($10{,}000$ nodes) and on BA graphs.
DA2C remains the best-performing method across these shifts, indicating strong transfer across graph sizes and topologies.

\section{Conclusions and future work}\label{sec:conc}

Our results demonstrate that the diffusion value function (DVF) is an effective critic representation for MARL on graph-based MDPs.
We showed that the proposed DA2C algorithm, which leverages the DVF, achieves strong performance across over 30 environment types evaluated and often outperforms the baselines considered, yielding improvements of up to 11\% in average reward.

While our experiments follow the CTDE paradigm, an important direction for future work is to develop fully decentralised DVF training schemes (see Section~\ref{ssec:estimation}).
In addition, our current approach assumes a fixed influence graph; extending DVF-based methods to time-varying or partially observed interaction graphs would increase their applicability to realistic large-scale systems.
Finally, tackling more complex environments and richer communication constraints remains a promising avenue for further investigation.

\newpage

\printbibliography

\newpage
\section*{Appendix}
\appendix

\section{Properties of the local value function}

\subsection{Local value as decomposition of global value}\label{app:local_global}
The definition of the local value function \eqref{localv} arises naturally from the structure of the global value \eqref{globalvalue} for GMDPs. 
Recall that the global reward is $r^t = \mathcal{R}(S^t,A^t) = n^{-1}\sum_{j\in\mathcal{V}} R_j^t$ under the GMDP reward factorisation \eqref{decomp}.
Substituting into \eqref{globalvalue} gives
\[
V(S)
= \mathbb{E}_\pi\!\left[\sum_{t=0}^\infty n^{-1}\gamma^{t+1} \sum_{j\in \mathcal V} R_j^t \right].
\]
For any agent \(i\in \mathcal V\), we may decompose this expression according to the out-reachable set $\vec N_i^{t+1}$, i.e., the set of nodes reachable from $i$ by a directed path of length at most $t+1$:
\begin{align*}
V(S)
&= \mathbb{E}_\pi\! \left[\sum_{t=0}^\infty n^{-1}\gamma^{t+1} \sum_{j\in \vec N_i^{t+1}} R_j^t  \right]
 + B_i \\
&=: V_i(S) + B_i,
\end{align*}
where \(B_i\) collects all reward terms associated with agents outside \(\vec N_i^{t+1}\), i.e., $$B_i = \mathbb{E}_\pi\! \left[\sum_{t=0}^\infty n^{-1}\gamma^{t+1} \sum_{j\in \mathcal V \backslash \vec N_i^{t+1}} R_j^t \right].$$
Crucially, \(B_i\) is independent of agent \(i\)'s policy \(\pi_i\), since under the factorisation \eqref{decomp} the influence of \(\pi_i\) can propagate at most one hop per time step along the IG, and therefore cannot affect rewards outside $\vec N_i^{t+1}$:

\begin{lemma}[Limited influence]\label{lem:limited_influence}
Under the GMDP factorisation \eqref{decomp}, for any time $t$ and any agent $j\notin \vec N_i^{t+1}$, the distribution of $R_j^t$ is independent of agent $i$'s policy $\pi_i$ while keeping the other agents' policies fixed.
\end{lemma}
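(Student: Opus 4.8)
\textbf{Proof plan for Lemma~\ref{lem:limited_influence}.}

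The plan is to prove this by induction on $t$, tracking how the ``cone of influence'' of agent $i$ expands at most one hop per time step. Fix agent $i$ and consider two joint policies that agree on all agents except possibly on agent $i$'s component $\pi_i$. I would first set up the induction hypothesis: for each time step $s$, the joint distribution of the state variables $(S_k^s)_{k \in \mathcal{V}\cup\{0\}}$ restricted to the coordinates $k \notin \vec N_i^{s}$ (together with $S_0^s$) is independent of the choice of $\pi_i$, and moreover the action variables $A_k^s$ for $k \notin \vec N_i^{s}$ are as well. The base case $s=0$ is immediate since $S^0$ is the fixed initial state and the actions of agents $k \ne i$ at time $0$ depend only on $S_k^0$ (or their local histories $\tau_k^0$), which for $k \notin \vec N_i^0 = \{i\}$ does not involve $\pi_i$.

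For the inductive step, suppose the claim holds at time $s$. A node $k$ with $k \notin \vec N_i^{s+1}$ has the property that \emph{none} of its in-neighbours lies in $\vec N_i^{s}$: if some $j \in \cev N_k$ had $j \in \vec N_i^{s}$, then $k$ would be reachable from $i$ in at most $s+1$ hops, contradicting $k \notin \vec N_i^{s+1}$ (using self-connectivity to handle $k=j$). Hence $\cev N_k \subseteq \mathcal{V} \setminus \vec N_i^{s}$. By the transition factorisation in~\eqref{decomp}, $S_k^{s+1} \sim T_k(\cdot \mid S_{\cev N_k}^s, S_0^s; A_{\cev N_k}^s)$, and by the induction hypothesis the inputs $(S_{\cev N_k}^s, S_0^s, A_{\cev N_k}^s)$ have a law independent of $\pi_i$; since the transitions factor as a product and $S_0^{s+1} \sim T_0(\cdot\mid S_0^s)$ is also $\pi_i$-independent, the joint law of $(S_k^{s+1})_{k \notin \vec N_i^{s+1}}$ together with $S_0^{s+1}$ is independent of $\pi_i$. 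The corresponding statement for the actions $A_k^{s+1}$ with $k \notin \vec N_i^{s+1}$ follows because agent $k \ne i$ chooses $A_k^{s+1}$ from its local history $\tau_k^{s+1}$, which only involves $S_k^{0:s+1}$ and $A_k^{0:s}$; one checks that each such coordinate was covered by the induction hypothesis at the appropriate earlier time, since $k \notin \vec N_i^{s+1}$ implies $k \notin \vec N_i^{s'}$ for all $s' \le s+1$ (the reachable sets are nested). This closes the induction.

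Finally, the lemma follows by applying the reward factorisation: for $j \notin \vec N_i^{t+1}$ we have $R_j^t = \mathcal{R}_j(S_{\cev N_j}^t, S_0^t; A_{\cev N_j}^t)$, and exactly as in the transition step, $j \notin \vec N_i^{t+1}$ forces $\cev N_j \subseteq \mathcal{V}\setminus \vec N_i^{t}$, so all arguments of $\mathcal{R}_j$ at time $t$ have a joint law independent of $\pi_i$ by the induction claim at time $t$; hence so does the distribution of $R_j^t$. I expect the main obstacle to be bookkeeping rather than conceptual: one must be careful that agents act on \emph{histories} $\tau_k^t$, not just current local states, so the induction hypothesis must be stated strongly enough to cover the entire local trajectory of each ``uninfluenced'' node, and one must invoke the nestedness $\vec N_i^{s} \subseteq \vec N_i^{s+1}$ (which uses self-loops $(k,k)\in\mathcal{E}$) to ensure a node that is outside the cone at time $t+1$ was outside it at all earlier times too. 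A minor subtlety is that the policies are stochastic and may share randomness conventions, so the cleanest phrasing is in terms of the induced laws (or a coupling where agent $i$'s action noise is the only thing that differs), which makes the marginal-independence claims precise.
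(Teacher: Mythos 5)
Your proof is correct and follows essentially the same approach as the paper: the paper's argument is precisely the informal statement that influence propagates at most one hop per step along the IG, and your induction on $s$ (with the key observation that $j\notin\vec N_i^{s+1}$ forces $\cev N_j\cap\vec N_i^{s}=\emptyset$) is a rigorous formalisation of that same idea. The only caveat you already flag yourself — that the induction hypothesis must cover the joint law of the full local trajectories of uninfluenced nodes, since policies condition on histories $\tau_k^t$ — is the right fix and does not change the argument.
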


\begin{proof}
By \eqref{decomp}, each local transition $T_k$ and reward $\mathcal R_k$ depends only on variables in $\cev N_k$ and the global state $S_0$.
Thus agent $i$'s action can affect only its out-neighbours' next states in one step, and influence propagates at most one hop per time step along the IG.
Therefore, after $t+1$ steps, agent $i$ can affect rewards only within $\vec N_i^{t+1}$, implying $R_j^t$ is independent of $\pi_i$ for $j\notin \vec N_i^{t+1}$.
\end{proof}

Consequently, optimising agent \(i\) with respect to the local value \(V_i\) yields the same policy gradient as optimising the full global value \(V\). The local value therefore provides a natural decentralised objective that avoids coupling through rewards that agent \(i\) cannot influence.

\subsection{Example: Unboundedness of local values on growing graphs}\label{ref:unboundedness}

The standard time-discount factor $\gamma$ in MDPs does not guarantee bounded local values in GMDPs; a similar observation was made in \parencite{localDECPOMDP}. We illustrate this with the following example.

Let an infinite, self-connected acyclic graph be given in which each node has degree $d>0$. Consider a GMDP with constant rewards $r_i \equiv 1$ for all nodes $i \in \mathbb{N}_0$. The local value at node $i$ is

\[
V_i = \sum_{t=0}^{\infty} \sum_{j \in \vec N_i^{t+1}} \gamma^{t+1} r_j
     = \sum_{t=0}^{\infty} \sum_{k=0}^{t+1} (d-1)^k \gamma^{t+1}.
\]

This series diverges whenever \((d-1)\gamma \ge 1\), reflecting the exponential growth of neighbourhood size with graph distance.

In contrast, the diffusion value at node \(i\) is
\[
(V_D)_i
= \sum_{t=0}^{\infty} (\mathbb{A}^{t+1})_{ii} \left(\frac{\gamma}{d}\right)^{t+1} r_i
= \sum_{t=0}^{\infty} \gamma^{t+1},
\]
which converges for all \(d\) and all \(\gamma \in (0, 1)\).
Although our setting does not involve infinite graphs, this example illustrates that local values can grow unboundedly with graph size, whereas diffusion values remain stable.

\section{Properties of the diffusion value function}

\subsection{Motivation and limitations of standard value functions}
\label{app:limitations}
Neither the global nor the purely local value function is well suited to large-scale MARL.
The global value \eqref{globalvalue} is conceptually aligned with the overall objective, but it becomes impractical as the number of agents grows: it fails to exploit sparsity in the interaction graph, aggregates reward noise from all agents, and admits no useful decomposition for decentralised learning.

Conversely, the local value \eqref{localv} depends only on nearby observations, which improves scalability, but it does not reflect how influence propagates through the system.
As a result, it may assign disproportionate weight to rewards generated by distant agents even when the coupling along long paths is weak.
Moreover, as shown in Appendix~\ref{ref:unboundedness}, local values can grow without bound as the graph size increases for infinite time horizons.

These limitations motivate the need for an agent-wise value construction that preserves local structure while remaining well behaved and aligned with the global objective. This motivates the DVF introduced in Section~\ref{ssec:dvf}, which interpolates between purely local and fully global value definitions while remaining bounded and scalable.

For additional context on proposing value functions as surrogate learning objectives, see Appendix~\ref{app:proposing_vf}.

\subsection{Designing surrogate value functions}\label{app:proposing_vf}
Often, value functions are derived directly from the underlying model and objective (see \parencite{GMDP_RL} for example).
However, in model-free or partially specified settings it is common to optimise value functions that serve as useful surrogates, even when they are not uniquely implied by the model.
A prominent example is the use of monotonic value functions in MARL \parencite{QMIX,vdn}: although the optimal joint action-value is not guaranteed to be monotonic in environments such as StarCraft II, imposing this structure has proven effective empirically.
Similarly, discounted return can be viewed as a surrogate objective: it is optimised during training even when average reward or undiscounted metrics are used for evaluation.
Reward shaping techniques follow the same logic.

The local value function defined in Section~\ref{sec:local_values} follows directly from the structure of GMDPs.
The DVF, by contrast, is a surrogate construction designed to retain locality while yielding a stable and well-scaled learning signal under centralised training with decentralised execution; its local structure further suggests that decentralised training could be feasible in principle, although we do not explore this in the present work.

\subsection{Factored value functions vs.\ factored critic architectures}\label{app:factorisation}
The term \emph{factorisation} is used in different ways in the MARL literature. 
Popular value decomposition methods such as VDN \parencite{vdn}, QMIX \parencite{QMIX}, DCG \parencite{coordination_graph}, and DFQ \parencite{dfq} factorise the \emph{critic network} to approximate a \emph{global} (non-factored) value function, typically using a global reward signal. 
These approaches are widely studied in cooperative MARL settings often modelled as Dec-POMDPs \parencite{pomdp_book}.

In contrast, \emph{factored value functions} in the sense of GMDPs explicitly leverage the influence structure and local reward dependencies to define agent-indexed value components that can be estimated directly \parencite{distributed_V}. 
Such factored values have been studied in planning on structured models \parencite{localDECPOMDP}, but are comparatively less explored as learnable critics in MARL.
We include this note because the term ``factored value function'' is sometimes used ambiguously to refer to both perspectives \parencite{value_decomposition2, value_decomposition}.

\subsection{Relation between the diffusion value function and diffusion processes}\label{ref:rel_diffusion}

The diffusion interpretation of the DVF follows from the structure of the operator~$\Gamma$.
The matrix $\Gamma^\top$ describes how credit propagates across the graph, and
$\Gamma^\top/\gamma$ is exactly the transition matrix of a random walk on $\mathcal{G}$.
In the small-scale limit such random walks converge to continuous diffusion processes,
establishing a direct connection between the DVF operator and classical diffusion dynamics.

\subsection{Proofs of DVF properties} \label{theory}

Before proving Propositions \ref{prop:bounded}, \ref{prop:uniqueness}, \ref{prop:factored} and \ref{prop:alignment}, we state the following property of  $D^{-1} \mathds{A}^T$:
\begin{lemma}\label{lem:prop}
The matrix $D^{-1}\mathds{A}^T$ is row-stochastic, i.e., each row sums to $1$.
Equivalently, $\mathds{1}$ is a right eigenvector with eigenvalue $1$: $D^{-1}\mathds{A}^T\mathds{1}=\mathds{1}$. 
Further, $\mathds{A} D^{-1}$ is column-stochastic, i.e., each column sums to $1$, and hence $\mathds{1}^\top \mathds{A} D^{-1}=\mathds{1}^\top$.

\end{lemma}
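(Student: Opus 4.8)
The plan is to prove Lemma~\ref{lem:prop} by direct verification of the row/column sums, which is essentially a one-line computation once the definitions of $\mathds{A}$ and $D$ are unpacked.

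First I would recall that $D$ is the diagonal in-degree matrix, $D_{ii} = \sum_j \mathds{A}_{ji}$, i.e.\ $D_{ii}$ is the $i$-th column sum of $\mathds{A}$ (equivalently the $i$-th row sum of $\mathds{A}^\top$). For the first claim I would compute the $i$-th row sum of $D^{-1}\mathds{A}^\top$:
\[
\sum_{j} (D^{-1}\mathds{A}^\top)_{ij} = \frac{1}{D_{ii}} \sum_j (\mathds{A}^\top)_{ij} = \frac{1}{D_{ii}} \sum_j \mathds{A}_{ji} = \frac{D_{ii}}{D_{ii}} = 1,
\]
where I must note that $D_{ii} > 0$ for every $i$ because the IG is self-connected (so $(i,i)\in\mathcal{E}$ and $\mathds{A}_{ii}=1$), which makes $D^{-1}$ well-defined. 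This gives row-stochasticity, and the equivalent statement $D^{-1}\mathds{A}^\top\mathds{1}=\mathds{1}$ is just the vector form of ``all row sums equal $1$.''

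For the second claim, $\mathds{A}D^{-1}$ is simply the transpose of $D^{-1}\mathds{A}^\top$ (using that $D^{-1}$ is symmetric, being diagonal): $(\mathds{A}D^{-1})^\top = D^{-1}\mathds{A}^\top$. Since transposition swaps rows and columns, a row-stochastic matrix transposes to a column-stochastic one, so $\mathds{A}D^{-1}$ is column-stochastic; equivalently $\mathds{1}^\top \mathds{A}D^{-1} = (D^{-1}\mathds{A}^\top\mathds{1})^\top = \mathds{1}^\top$. Alternatively one can verify the column sum directly: $\sum_i (\mathds{A}D^{-1})_{ij} = \frac{1}{D_{jj}}\sum_i \mathds{A}_{ij} = \frac{D_{jj}}{D_{jj}} = 1$.

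There is no substantive obstacle here — the only point requiring care is ensuring $D$ is invertible, which is exactly what the self-connectedness assumption on the IG guarantees; I would state this explicitly rather than leave it implicit. This lemma is purely a bookkeeping step that will be invoked repeatedly in the proofs of Propositions~\ref{prop:bounded}--\ref{prop:alignment} (e.g.\ the $\|\Gamma\|_1 = \gamma$ bound in Proposition~\ref{prop:uniqueness} follows from column-stochasticity of $\mathds{A}D^{-1}$, and the averaging identity in Proposition~\ref{prop:factored} uses $\mathds{1}^\top\mathds{A}D^{-1}=\mathds{1}^\top$).
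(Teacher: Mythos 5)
Your proposal is correct and follows essentially the same route as the paper's proof: a direct computation of the row sums of $D^{-1}\mathds{A}^\top$ using $D_{ii}=\sum_j \mathds{A}_{ji}$, followed by the transpose identity $\mathds{A}D^{-1}=(D^{-1}\mathds{A}^\top)^\top$ (valid since $D$ is diagonal) to obtain column-stochasticity. Your explicit remark that self-connectedness guarantees $D_{ii}>0$ and hence the invertibility of $D$ is a small but worthwhile addition that the paper leaves implicit.
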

\
\begin{proof}
Note that  $D\in \mathds R^{n\times n}$ is the (diagonal) in-degree matrix with $D_{ii} = \sum_j \mathds{A}_{ji}$ and $D_{ij}=0$ for $i\neq j$. Hence, for any row $i$, we have
\begin{align*}
(D^{-1} \mathds{A}^T)_{ik} = \frac{(\mathds{A}^T)_{ik}}{D_{ii}}= \frac{\mathds{A}_{ki}}{\sum_j \mathds{A}_{ji}},
\end{align*}
implying that
\begin{align*}
\sum_{k=1}^n(D^{-1} \mathds{A}^T)_{ik} = \frac{\sum_{k}\mathds{A}_{ki}}{\sum_j \mathds{A}_{ji}} = 1.
\end{align*}
Since $D$ is diagonal, $D=D^T$, hence
\[
\mathds{A}D^{-1} = (D^{-1}\mathds{A}^T)^T
\]
and therefore the columns of $\mathds{A}D^{-1}$ sum to $1$.
\end{proof}

\begin{proof}[Proof of Proposition \ref{prop:bounded}]
Recall $\Gamma = \gamma \mathds{A}D^{-1}$ with $\gamma\in(0,1)$.  
By Lemma~\ref{lem:prop},  the columns of $\mathds{A}D^{-1}$ sum to $1$. Since $\Gamma$ has nonnegative entries, 
\[
\|\Gamma\|_1 = \max_j\sum_{i=1}^n \Gamma_{ij} \;=\; \gamma \max_j \sum_{i=1}^n (\mathds{A}D^{-1})_{ij} \;=\; \gamma \;<\; 1.
\]
It follows that $\|\Gamma^k\|_1 \le \|\Gamma\|_1^k = \gamma^k \to 0$ as $k\to\infty$, and the Neumann series converges:
\[
\sum_{t=0}^{\infty}\Gamma^t = (I-\Gamma)^{-1}, \qquad 
\sum_{t=0}^{\infty}\Gamma^{t+1} = \Gamma(I-\Gamma)^{-1}.
\]
Finally, since $R^t$ is bounded by assumption, the series defining $V_D(S)$ in \eqref{dvf} converges and $V_D(S)$ is bounded as well.
\end{proof}

\begin{proof}[Proof of Proposition~\ref{prop:uniqueness}]
Note that the Bellman operator $\mathcal{T}^\pi$ on vector-valued functions $V:\mathcal{S}\to\mathbb{R}^n$ is given by
\[
(\mathcal{T}^\pi V)(S) = \Gamma\,\mathbb{E}_\pi\!\left[R^t + V(S^{t+1}) \mid S^t = S\right].
\]
Recall that $\Gamma = \gamma \mathds{A} D^{-1}$ with $\gamma \in (0,1)$.  
By Lemma~\ref{lem:prop}, the columns of $\mathds{A}D^{-1}$ sum to $1$.
Since $\Gamma$ has nonnegative entries, it follows that
\[
\|\Gamma\|_1 = \max_j \sum_i \Gamma_{ij} = \gamma < 1.
\]
Now consider the norm $\|V\|_\infty := \sup_{S\in\mathcal{S}}\|V(S)\|_1$. For any two functions $V,W$ and any state $S$,
\begin{align*}
&\|(\mathcal{T}^\pi V)(S) - (\mathcal{T}^\pi W)(S)\|_1\\
&= \left\|\Gamma\,\mathbb{E}_\pi\!\left[V(S^{t+1})-W(S^{t+1}) \mid S^t=S\right]\right\|_1\\
&\le \|\Gamma\|_1 \,\mathbb{E}_\pi\!\left[\|V(S^{t+1})-W(S^{t+1})\|_1 \mid S^t=S\right]\\
&\le \|\Gamma\|_1 \|V-W\|_\infty.
\end{align*}
Taking the supremum over $S$ yields
\[
\|\mathcal{T}^\pi V - \mathcal{T}^\pi W\|_\infty \le \|\Gamma\|_1 \|V-W\|_\infty = \gamma \|V-W\|_\infty.
\]
Thus $\mathcal{T}^\pi$ is a contraction on the space of bounded $V:\mathcal{S}\to\mathbb{R}^n$ under
$\|V\|_\infty=\sup_S\|V(S)\|_1$. By the Banach fixed-point theorem, it admits a unique fixed point, which coincides with the diffusion value function $V_D$ by \eqref{eq:dvf_bellman}.
\end{proof}

\begin{proof}[Proof of Proposition \ref{prop:factored}]
Let $\mathds{1}$ denote the $n$-dimensional all-ones vector. Using the definition of the DVF in \eqref{dvf} and $\Gamma = \gamma \mathds{A} D^{-1}$ with $\gamma\in(0,1)$ we can compute the element-wise average as
\begin{align*}
   &n^{-1} \mathds{1}^\top V_D(S) \\&=\mathbb E\left[ \sum_{t =0}^\infty  n^{-1}  \mathds{1}^\top \Gamma^{t+1} R^t  \mid S^{0} = S\right] \\
   &=\mathbb E\left[ \sum_{t = 0}^\infty  n^{-1}  \gamma^{t+1} \mathds{1}^\top  (\mathds{A} D^{-1})^{t+1}R^t \mid S^{0} = S\right].
\end{align*}

Lemma \ref{lem:prop} ensures $\mathds{1}^\top \mathds{A} D^{-1}=\mathds{1}^\top$, implying $ \mathds{1}^\top  (\mathds{A} D^{-1})^{t+1}=\mathds{1}^\top$ for all $t\ge 0$ and hence
\begin{align*}
   n^{-1} \mathds{1}^\top V_D(S) 
    &= \mathbb E\left[\sum_{t =0}^\infty  n^{-1}\gamma^{t+1} \mathds{1}^\top R^t \mid S^{0} = S\right].
\end{align*}
Under the GMDP reward factorisation \eqref{decomp}, $$r^t = n^{-1} \mathds{1}^\top R^t,$$ and therefore 
\begin{align*}
   n^{-1} \mathds{1}^\top V_D(S) 
    &= \mathbb E\left[\sum_{t =0}^\infty  \gamma^{t+1} r^t \mid S^{0} = S\right] 
    = V(S),
\end{align*}
i.e., the average diffusion value equals the global value.
\end{proof}

\begin{proof}[Proof of Proposition \ref{prop:alignment}]
Let $\mathds{1}$ denote the $n$-dimensional all-ones vector.
By Proposition \ref{prop:factored}, $$V^\pi(S) =  n^{-1}\mathds{1}^\top V_D^\pi(S)$$ under the GMDP reward factorisation \eqref{decomp} and similarly for $V^{\pi'}(S)$.
Hence,
\[
V^{\pi'}(S) =  n^{-1}\mathds{1}^\top V_D^{\pi'}(S) >  n^{-1}\mathds{1}^\top V_D^\pi(S) = V^\pi(S),
\]
since at least one component of $V_D^{\pi'}(S)$ is strictly larger.
\end{proof}

\section{Task formulations}
We provide more details on our applications: the firefighting application in Section \ref{sec:firefight_setup}, the vector graph colour application in Section \ref{sec:graph_col} and two transmit power control applications (service quality and energy efficiency rewards) in Section \ref{sec:transmitpower}.

\subsection{Firefighting application}\label{sec:firefight_setup}
We formulate the firefighting environment of \parencite{localDECPOMDP} as a GMDP consistent with \eqref{decomp}. The environment is defined on a bipartite graph between firefighters $\mathcal{V}$ and homes $\mathcal{H}$. Each home $h\in\mathcal{H}$ has a fire level $f_h^t\in\{0,\ldots,f_{\max}\}$ and is initialised with $f_h^0$ sampled uniformly from $\{0,\ldots,f_{\max}\}$. A home $h$ is burning if $f_h^t>0$. 

At each time step, the fire level of a home increases by one with probability $0.8$ if at least one adjacent home is burning (where home adjacency is induced by sharing a firefighter neighbour); otherwise, if the home itself is burning, its fire level increases with probability $0.4$. Each firefighter $i\in\mathcal{V}$ selects a home $h\in N_i$ to move to, where $N_i\subseteq\mathcal{H}$ denotes the set of homes adjacent to firefighter $i$ in the bipartite graph. If exactly one firefighter moves to a home, its fire level decreases by one; if two or more move to the same home, the fire is extinguished ($f_h^t=0$).

\paragraph{Global reward.}
We define the global reward as the negative average fire level across homes,
\begin{equation*}
r^t \;=\; {|\mathcal{H}|}^{-1}\sum_{h\in\mathcal{H}} -f_h^t .
\end{equation*}

\paragraph{Influence graph and local rewards.}
We define the influence graph over firefighters by
\[
(i,j)\in\mathcal{E} \quad \Longleftrightarrow \quad N_i \cap N_j \neq \emptyset,
\]
i.e., two firefighters influence each other if they can act on a common home. This graph is self-connected provided each firefighter is adjacent to at least one home.

To obtain a reward factorisation consistent with \eqref{decomp}, we define a local reward for each firefighter $i$ by distributing each home's contribution equally among its adjacent firefighters:
\begin{equation*}
R_i^t \;=\; \sum_{h\in N_i} -\frac{f_h^t}{|N_h|}.
\end{equation*}
Here, $N_h := \{i\in\mathcal{V}: h\in N_i\}$ is the set of firefighters adjacent to home $h$ and we assume each home is adjacent to at least one firefighter, so $|N_h|\geq 1$. Then
\[
\sum_{i\in\mathcal{V}} R_i^t
=\sum_{i\in\mathcal{V}} \sum_{h\in N_i} -\frac{f_h^t}{|N_h|}
=\sum_{h\in \mathcal H} \sum_{i\in N_h}  -\frac{f_h^t}{|N_h|}
=\sum_{h\in\mathcal{H}} -f_h^t.
\]
Thus $r^t = |\mathcal H|^{-1}\sum_{i\in\mathcal V} R_i^t$; equivalently, defining $\tilde R_i^t := \frac{|\mathcal V|}{|\mathcal H|} R_i^t$ yields
\[
r^t = |\mathcal V|^{-1}\sum_{i\in\mathcal V} \tilde R_i^t,
\]
matching \eqref{decomp} exactly.

\paragraph{Local transition dependence.}
The fire dynamics of a home $h$ depend only on (i) the current fire levels of $h$ and its adjacent homes, and (ii) the set of firefighters that move to $h$ at time $t$. The latter depends only on the actions of firefighters in $N_h$. Consequently, the evolution of the fire levels in the neighbourhood of firefighter $i$ depends only on the actions of firefighters $j$ with $N_i\cap N_j\neq\emptyset$, i.e., on its neighbours in the influence graph. Therefore, rewards and transitions are local with respect to this graph, yielding a valid GMDP formulation.

\subsection{Vector graph colouring application} \label{sec:graph_col}
In this section, we provide a mathematical formulation of the vector graph colouring application.
Let $c \in \mathds N$ be the number of colours available. With reference to Section~\ref{distcomp}, we consider an undirected communication graph $(\mathcal V, \mathcal E)$. At time $t$, each node $i \in \mathcal V$ outputs a binary vector $Y^t_i \in \{0, 1\}^c$ which determines its colour assignment. We define the reward $r^t = {|\mathcal V|}^{-1} \sum_{i \in \mathcal V} R^t_i$, where
\[
R^t_i = \left<Y^t_i, Y^t_i\right> - p_m \sum_{j \in N_i}  \left<Y^t_i, Y^t_j\right>.
\]
This reward encourages each node to allocate as many colours as possible while avoiding the colours allocated by neighbours, with the parameter $p_m$ determining the penalty for colour conflicts.

Randomised algorithms are known to be more efficient than deterministic ones for distributed graph colouring \parencite{randomised_alg} due to graph symmetries which need to be broken if a good colouring is to be found. To introduce randomisation, we give each node a random number as its observation $O_i \sim U(0, 1)$. This is a minor change, but we found that it yields significant improvements in performance.

Since message passing is not penalised, we assume that messages are always passed between neighbouring agents. We can describe this task as a GMDP where the influence and communication graphs are identical. The task admits an agent-wise reward factorisation via $R_i^t$ and has no hidden state $S_0 \equiv \emptyset$. Each local state $S_i^t$ consists of the information available at node $i$ at time $t$ (e.g., its local observation and the most recent neighbour messages). Local states thus represent the information used by each agent to compute its output $Y_i^t$.

\subsection{Transmit power control applications}\label{sec:transmitpower}

We consider two multi-agent wireless network tasks, namely service quality maximisation and energy efficiency optimisation, and present a formal model for each, along with an approximate GMDP formulation suitable for MARL.

\subsubsection{Network and observation model}\label{sec:transmitpower_setting}

We model a wireless network as a set of linked transmitter-receiver (TX-RX) pairs forming a weighted, undirected communication graph $(\mathcal V, H)$, shown in Figure~\ref{fig:trgraph}. Each node $i \in \mathcal V$ represents a TX-RX pair and  the edge weight $H_{ij} = H_{ji} \in [0,1]$ encodes the proportion of power transmitted at node $i$ that interferes with the receiver at node $j$ (further details in Appendix~\ref{sec:gen_details}).

\begin{figure}[ht]
    \centering
    \includegraphics[width=0.8\linewidth]{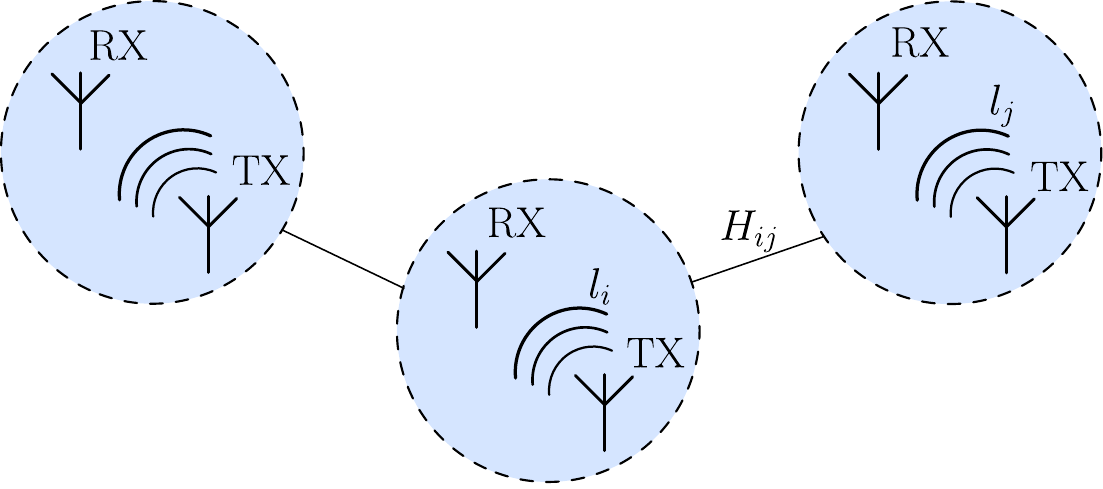}
    \caption{TX-RX communication graph.}
    \label{fig:trgraph}
\end{figure}

At each time $t$, node $i$ observes
\[
O_i^t = (\alpha_i^t, \beta_i^t, l_i^t, \mathcal I_i^{t-1}),
\]
where $\alpha_i^t, \beta_i^t$ are user-specific service quality parameters, $l_i^t \in [0,1]$ is the received-to-transmitted power ratio, and $\mathcal I_i^t$ is the interference at the receiver. The variables $\alpha_i^t, \beta_i^t, l_i^t$ evolve as bounded Gaussian random walks (BGRW), ensuring values remain within specified ranges. For use in neural networks, BGRWs can be normalised (see Appendix~\ref{app:bgrw}).

The interference at node $i$ depends on the power transmitted by its neighbouring nodes, and is given by
\[
\mathcal I_i^t = \sum_{j \in N_i} H_{ji} \, p_j^t + N_0,
\]
where $p_j^t$ is the total transmit power of node $j$, $N_0$ is the noise floor, and $N_i$ is the set of neighbours of $i$. The total transmit power at node $i$ is
\[
p_i^t = Y_i^t + p_m |C_i^t| + p_0,
\]
where $Y_i^t$ is the controllable transmit power (the agent's action), $|C_i^t|$ counts the messages passed by node $i$, and $p_m, p_0$ are constants representing message-passing and baseline power consumption, respectively. This linear model reflects per-message transmission and processing costs and is commonly used as a first-order approximation in communication-aware MARL.

For notational convenience, we represent the received interference explicitly as $\mathcal I_i^t$ rather than implicitly through the channel matrix $H$.
This allows us to write the interference vector compactly as $\mathcal I^t = H p^t + N_0 \mathds{1}$, where $p^t$ is the transmit-power vector and $\mathds{1}$ is the all-ones vector.
Explicitly separating $\mathcal I_i^t$ also makes it easy to isolate the desired-signal term from aggregate interference and noise when computing capacity, and matches the vectorised implementation used in our code.

\subsubsection{Rewards}\label{sec:tp_rewards}
In this subsection, we introduce the rewards for the two tasks: service quality and energy efficiency. 

\paragraph{Service quality.}\label{sec:servicequality}

Service quality is often measured as the data rate achieved by a transmitter, but the relationship between data rate and perceived quality is typically non-linear and user-dependent. For example, inelastic traffic such as phone calls experiences diminishing returns at high rates \parencite{congestion}. We model this behavior using a user-specific sigmoid function
\[
q_{\alpha, \beta}(c) = \frac{1 - \exp(-c/\beta)}{1 + \exp((\alpha - c)/\beta)} \in [0,1),
\]
where $c$ is the channel capacity, and $\alpha, \beta$ control the steepness and range of quality increase. This functional form captures saturation effects while remaining smooth and bounded, which stabilises policy optimisation. For node $i$ at time $t$, the channel capacity is $c_i^t = \log_2\big(1 + l_i^t Y_i^t / \mathcal I_i^t\big)$ and the corresponding service quality is $q_{\alpha_i^t, \beta_i^t}(c_i^t)$. This yields the global reward
\[
r^t_{\text{SQ}} = \frac{1}{|\mathcal V|} \sum_{i \in \mathcal V} q_{\alpha_i^t,\beta_i^t}(c_i^t).
\]

\paragraph{Energy efficiency.}\label{sec:energyefficiency}
Energy efficiency balances service quality against power consumption. For node $i$ at time $t$, we define
\[
\varphi_i^t = \frac{q_{\alpha_i^t,\beta_i^t}(c_i^t)}{p_i^t},
\]
resulting in the global rewards 
\[
r^t_{\text{EE}} = \frac{1}{|\mathcal V|} \sum_{i \in \mathcal V} \varphi_i^t, 
\]
penalising excessive message passing through the dependence on $p_m$.

\subsubsection{Model assumptions}

We note that in centralised training, there is no need to directly measure the channel capacity $c_i^t$ for each node, since it can be computed centrally using a known model of the environment. Training is performed offline in simulation rather than on the physical system. In real communication systems, interference can be estimated by measuring the total received signal power during intervals when the transmitter is silent. Measuring $c_i^t$ (e.g., to select modulation and coding schemes) is typically done using pilot signals. Our model follows this common structure: precise during centralised training, while relying on local estimation mechanisms during execution.

In our setup, the exchanged messages are essentially control signals carrying minimal information. For simplicity, we assume they are delivered reliably, which is a common abstraction in multi-agent communication settings. Packet loss or rate constraints could be incorporated by randomly dropping edges or introducing message noise; this is left as a direction for future extensions. Moreover, we assume that data transmission and message exchange occur simultaneously. In a real communication system, these would typically be multiplexed to avoid interference, but we adopt this simplification to maintain a structured MDP that captures key challenges such as interference and coordination, while keeping the environment tractable for evaluation.

\subsubsection{Bounded Gaussian random walks (BGRW)}\label{app:bgrw}

A BGRW is a sequence $x_1, x_2, x_3, \hdots$ where $x_1$ is chosen uniformly from an interval $[x_\text{min}, x_\text{max}]$ and $x_t =\text{Clip}(y_t,\, x_\text{min},\, x_\text{max})$ for $t > 1$, where $y_t \sim N(x_{t-1},\, \sigma^2)$ and the clip function $\text{Clip}$ is defined as
$$\text{Clip}(y_t,\, x_\text{min},\, x_\text{max})=\begin{cases}
    x_\text{min},& y\leq x_\text{min},\\
        y,& y \in (x_\text{min},\, x_\text{max}),\\
            x_\text{max} ,& y \geq  x_\text{max}.\\
\end{cases}$$
Since $p_0>0$, the denominator is strictly positive.
When used as input to an ML model, a BGRW can be normalised as $$\Bar{x} = \frac{2\,x - x_\text{max} - x_\text{min}}{4\sqrt{3}\,(x_\text{max} - x_\text{min})}.$$
Here, the scaling constant $4\sqrt{3}$ normalises the variance of a uniform random variable on $[x_\text{min}, x_\text{max}]$ to approximately unit variance, improving numerical stability.

\subsubsection{Approximate GMDP formulation}\label{gmdp_desc}

We treat transmit-power decisions $Y$ as part of the environment dynamics and focus learning on the communication policy, which determines which messages are exchanged. We formulate each communication edge $(i,j)$ as a GMDP agent that decides whether to transmit a message from $i$ to $j$. Using an exact GMDP description for the transmit power tasks results in a very dense influence graph, which is computationally impractical. We therefore employ a sparser approximation that focuses on message passing dynamics while keeping the node outputs $Y$ fixed. This formulation provides the reward and observation structures necessary to applying the diffusion value function (DVF) and GNN-based MARL methods described in Section~\ref{sec:sgnn}.

In this formulation, the passing policy acts independently on each edge. Consequently, each communication edge $(i,j) \in \mathcal E$ can be treated as a GMDP agent with two discrete actions: either transmit a message from node $i$ to $j$, or do not transmit. This yields an \emph{edge GMDP} defined on the influence graph $(\mathcal E, \mathcal L)$, illustrated in Figure \ref{fig:comm-influence-graph}.

\begin{figure}[ht]
    \centering
    \includegraphics[width=\linewidth]{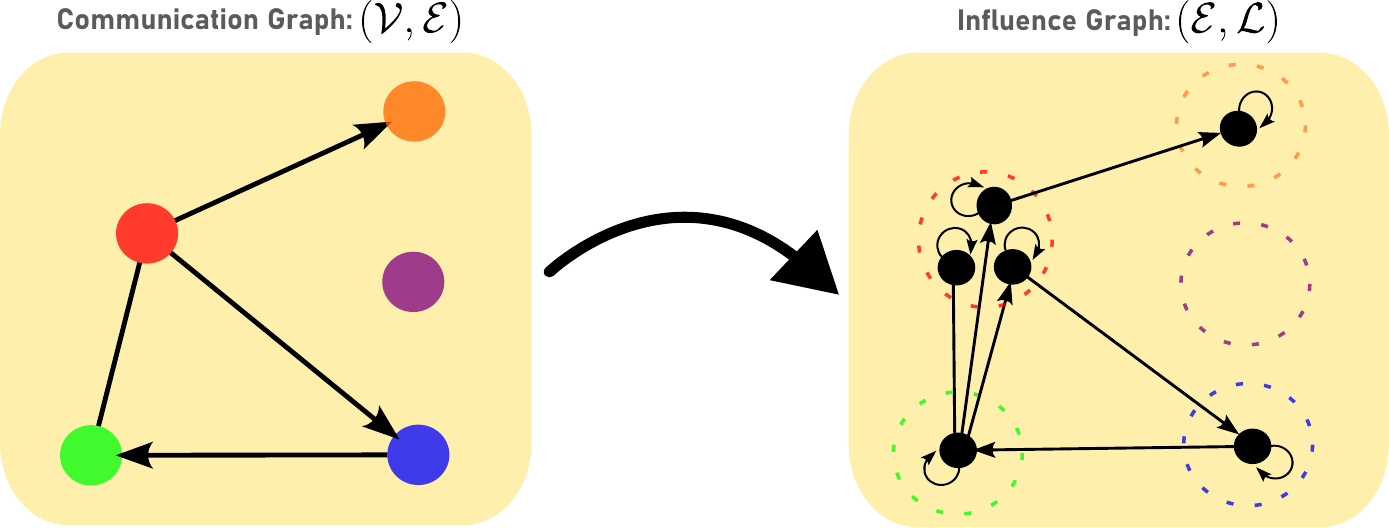}
    \caption{Transformation from a communication graph to the influence graph of its edge GMDP.}
    \label{fig:comm-influence-graph}
\end{figure}

To define edge rewards, we first compute smoothed node rewards
\begin{equation} \label{r_avg}
    U^t = \mathds{A} D^{-1} R^t,
\end{equation}
and assign to each edge agent $(i,j)$ a reward
\[
R^t_{(i,j)} = \frac{U^t_i}{|N_i|}.
\]
This ensures that the sum of edge rewards equals the sum of node rewards, i.e.,
\[
\sum_{e\in \mathcal E} R_e^t = \sum_{i\in \mathcal V} R_i^t,
\]
so the edge GMDP preserves the proportionality of rewards ensuring the edge GMDP yields a communication policy that is consistent with the original objective.

Passing a message $(i,j)$ has three effects: (i) it delivers information to node $j$, (ii) consumes energy at node $i$, and (iii) generates interference at nodes in $N_i$. Thus, edge $(i,j)$ influences all nodes in $N_i \cup \{i\}$, and consequently all edge agents $(k,l)$ with $k \in N_i \cup \{i\}$. For a communication graph with average degree $d$, this would yield an edge influence graph $(\mathcal E, \mathcal L)$ with average degree $\mathcal O(d^2)$—too dense for practical learning.

To reduce complexity, we focus only on information transmission as the source of influence. The resulting edge influence graph is
\[
\mathcal L = \{(e,f) \in \mathcal E^2 : e = f \text{ or } e_1 = f_0 \},
\]
where $e_0$ and $e_1$ denote the start and end nodes of edge $e$. In other words, $\mathcal L$ contains self-connections and directed edges linking edges whose head coincides with another edge's tail, while preserving directionality.

Although energy consumption and interference effects are ignored in the influence graph, the reward averaging in \eqref{r_avg} ensures that these factors still contribute to edge rewards.

Further implementation details, including the actor network architecture for learning the edge GMDP policy, are provided in Appendix~\ref{architecture}.

\section{Learned DropEdge GNN (LD-GNN) details}

\subsection{Message passing as communication decisions}\label{app:gnns}
Standard GNN formulations typically perform message aggregation over a fixed candidate neighbourhood (e.g., all adjacent nodes), learning message functions and/or attention weights \parencite{gat} within that set. 
In contrast, our setting treats \emph{communication decisions} (i.e., which node pairs exchange messages) as a learnable component under the distributed computation constraint in Section~\ref{distcomp}. 
Specifically, we learn which communication links to use under distributed-system constraints, treating sparse communication patterns as an explicit decision variable. 
Concretely, the message-passing policy $\lambda_\theta$ learns a distribution over message passes $C^t$, inducing a sparse communication graph at each iteration. 
Unlike attention weights, which define continuous importance scores over candidate edges, $C^t$ represents an explicit (potentially sparse) communication decision that determines which messages are actually sent.
This distinction is crucial when communication is costly. 

Related work explores learned sparsification or restricted message passing \parencite{learning_drop, learning_messages}; however, these approaches are not designed to enforce the distributed computation constraint in Section~\ref{distcomp}.

\subsection{Architecture details}\label{ld-gnn_model}

This subsection provides full details of the Learned DropEdge GNN (LD-GNN) actor architecture introduced in Section~\ref{sec:sgnn}. 
LD-GNN implements a learned distributed algorithm by jointly learning \emph{(i)} which messages to send and \emph{(ii)} what local output to produce, using only locally available information at each node.

\paragraph{Node and edge memories.}
For each node $i\in\mathcal V$, LD-GNN maintains a node memory state $X_i^t\in\mathbb R^{d_x}$. 
For each directed edge $(i,j)\in\mathcal E$, we additionally maintain an edge memory state $E_{ij}^t\in\mathbb R^{d_e}$ encoding information about communication from $i$ to $j$.
These memory states allow LD-GNN to implement multi-round distributed computation via recurrence.

\paragraph{Local embeddings.}
At time step $t$, node $i$ receives an observation $O_i^t$ and computes a local embedding
\begin{equation*}
I_i^t \;=\; \mathrm{MLP}^I_\theta(X_i^t,\, O_i^t),  
\end{equation*}
which summarises the node's current information and internal memory.

\paragraph{Sampling message passes.}
LD-GNN learns a message-passing policy $\lambda_\theta$ which assigns, independently for each directed edge $(i,j)$, a probability of sending a message from node $i$ to node $j$:
\[
\Pr(j\in C_i^t) \;=\; \lambda_\theta(I_i^t,\, E_{ij}^t)\in(0,1),
\]
where $C_i^t\subseteq \vec N_i$ denotes the set of neighbours that node $i$ chooses to communicate with at time $t$. 
The resulting communication sets $C^t=\{C_i^t\}_{i\in\mathcal V}$ induce an active edge set
\[
\mathcal F^t \;=\; \{(i,j)\in\mathcal E : j\in C_i^t\},
\]
which defines the subgraph on which message passing is performed.

\paragraph{Message aggregation with a GNN.}
Given the induced subgraph $\mathcal F^t$ and the node embeddings $I^t=\{I_i^t\}_{i\in\mathcal V}$, LD-GNN computes a message-aggregated representation
\begin{equation*}
Z^t \;=\; \mathrm{GNN}_\theta(I^t;\mathcal F^t),  
\end{equation*}
where $\mathrm{GNN}_\theta$ is a single message-passing layer applied on the active edges only.
In our implementation, $Z_i^t$ aggregates information from incoming neighbours $j$ such that $(j,i)\in \mathcal F^t$.

\paragraph{Memory updates.}
The node memory states are updated using a gated recurrent unit:
\begin{equation*}
X_i^{t+1} \;=\; \mathrm{GRU}^X_\theta(Z_i^t;\,X_i^t). 
\end{equation*}
Edge memories are updated only for edges that successfully transmitted a message. Specifically, for each directed edge $(i,j)\in\mathcal E$,
\begin{equation*}
E_{ij}^{t+1} \;=\;
\begin{cases}
\mathrm{GRU}^E_\theta(I_i^t,\, I_j^t;\,E_{ij}^t), & (i,j)\in \mathcal F^t,\\[2pt]
E_{ij}^t, & (i,j)\notin \mathcal F^t.
\end{cases}
\end{equation*}
This update allows each edge memory to accumulate information about past communication events.

\paragraph{Output policy.}
Finally, each node samples its task-dependent output using an output policy $\psi_\theta$ conditioned on the updated node memory:
\begin{equation*}
Y_i^t \sim \psi_\theta(\,\cdot \mid X_i^{t+1}).  
\end{equation*}
The output space depends on the application: for example, in graph colouring $Y_i^t\in\{0,1\}^c$, while in other tasks it may be discrete or continuous.

\paragraph{Distributed computation restriction.}
All computations at node $i$ are functions only of $O_i^t$, the node memory $X_i^t$, and edge memories $\{E_{ij}^t:j\in\vec N_i\}$.
Thus, both the message-passing decisions $C_i^t$ and the output $Y_i^t$ are generated using only locally available information, ensuring that LD-GNN satisfies the distributed computation restriction of Section~\ref{distcomp}.

\paragraph{Special cases.}
If communication is always permitted and not penalised (e.g.\ graph colouring), we set $\lambda_\theta\equiv 1$, so that $\mathcal F^t=\mathcal E$ for all $t$. In this case LD-GNN reduces to a recurrent GNN actor which learns only the output policy $\psi_\theta$. Task-specific implementation details are provided in Appendix~\ref{architecture}.

\section{Implementation details} \label{implement}

This appendix provides additional implementation details. 
We describe the extension to heterogeneous agents in Section \ref{app:hetero}, the generation of the influence graphs used in our synthetic environments in Section~\ref{sec:gen_details}, and the DVF training schemes in Section~\ref{sec:training_schemes}. 
Implementation details for the DA2C algorithm are given in Section~\ref{training}. 
Finally, we describe the LD-GNN architecture in Section~\ref{architecture}, which is trained using DA2C.

\subsection{Extension to heterogeneous agents}\label{app:hetero}
Our theoretical framework does not require homogeneous agents. In settings with multiple agent types $c(i)\in\{1,\dots,K\}$, one can use type-specific policies $\pi_{\theta_{c(i)}}$ and critics $V_{\phi_{c(i)}}$.
For centralised DVF estimation, a heterogeneous GNN can be used to compute $V_D(S)$ by conditioning message passing on agent types.
The distributed DVF training scheme remains unchanged, with each agent maintaining a local critic and exchanging the same neighbour tuples.

\subsection{Environment generation}\label{sec:gen_details}

This section describes how we generate influence graphs for the synthetic test environments.
We use synthetic environments because widely used MARL benchmarks such as StarCraft II \parencite{starcraft}
do not expose a clear GMDP structure with explicit local rewards and influence graphs, and they are relatively
small compared to the large-scale settings targeted by our approach. Moreover, we are not aware of standard MARL
baselines tailored to distributed computation at the scales considered here.

\subsubsection{Graph distributions and motivation}
We evaluate our method across three families of graph topologies to test robustness across distinct structural regimes:
Erd\H{o}s--R\'enyi (ER) graphs \parencite{erdos}, Barab\'asi--Albert (BA) graphs \parencite{barabasi}, and random geometric graphs.
These distributions capture qualitatively different network structures and are widely used models of networked interactions.
ER graphs are commonly used as standard random graph benchmarks (e.g., \parencite{erdos, hu2023}).
They typically model relatively homogeneous interaction networks (e.g., ad hoc or sensor networks), where degree
distributions concentrate and clustering is limited.
In contrast, BA graphs exhibit heavy-tailed degree distributions
with hubs and pronounced clustering, often used to model scale-free networks such as citation and internet graphs.
Random geometric graphs are widely used for wireless connectivity modeling (e.g., \parencite{eisen}) and are a natural
choice for our transmit power control tasks.

\subsubsection{Firefighting}
For the firefighting task, we considered environments with 10,000 homes and 5,000 firefighters. The connections between homes and firefighters sampled as an ER graph with two constraints: each firefighter is connected to at least two homes, and each home is connected to at least one firefighter. This ensures no trivial topologies are sampled.

\subsubsection{Graph colouring: ER and BA graphs}
For graph colouring, we evaluate two random graph families:
ER graphs \parencite{erdos} with $n=5{,}000$ nodes and mean degree $3$ (Section~\ref{sec:exp}),
and BA graphs \parencite{barabasi} with $n=1{,}000$ nodes and attachment parameter $m=3$
(Section~\ref{app:barabasi}).

We also note a practical difference in generation cost: ER graphs can be generated efficiently in a parallel fashion,
enabling scaling to thousands of nodes with minimal overhead. In contrast, BA graphs require sequential construction due
to preferential attachment, making large instances substantially more expensive to generate; we therefore limit BA graphs
to $1{,}000$ nodes.

\subsubsection{Transmit power tasks: random geometric graphs}
For the transmit power control environments, we construct random geometric graphs, which are standard models for wireless
connectivity (e.g., \parencite{eisen}). We first sample the number of nodes $n$ uniformly from a range $[n_\text{min},\, n_\text{max}]$.
We then draw node positions $p_i$ i.i.d.\ uniformly from the unit square $[0,1]^2$, and add an undirected edge $(i,j)\in\mathcal{E}$
if $\|p_i - p_j\|_2 < d$ for a fixed threshold $d>0$.
For the service quality task we use $n_{\text{min}} = 20, n_{\text{max}} = 50$, and for the energy efficiency task
we use $n_{\text{min}} = 10, n_{\text{max}} = 50$.
When interference channels are required, we compute them as
$H_{ij} = (\|p_i - p_j\|_2 + 0.1)^{-5}$.

.

\subsubsection{Comparison to coordination-graph methods (e.g., DCG)}\label{app:dcg}
Deep Coordination Graphs (DCG) \parencite{coordination_graph} and related value factorisation methods are typically evaluated in general cooperative MARL benchmarks, where an explicit influence graph and local reward factorisation are not provided.
DCG factorises a global value function into pairwise terms, and must therefore infer or approximate interaction structure from experience.
In contrast, our work focuses on GMDPs, where the influence structure is part of the model and induces an agent-wise value construction that aggregates effects over graph neighbourhoods, which may extend beyond pairwise interactions. In our experiments, we also include MAA2C, which estimates a global critic from local information and provides a complementary point of comparison to DCG-style methods.

Extending DA2C to the benchmark suites commonly used for DCG-style methods is an interesting direction for future work.
This would require learning or approximating the underlying influence structure in settings where it is not explicitly specified, enabling DVF-based critics to be applied in more general cooperative MARL environments.
In this paper, we focus on rigorous development and evaluation in domains where the influence structure is explicit and well-defined.

\subsection{DVF training schemes}\label{sec:training_schemes}

In this section, we describe practical methods for estimating the DVF using both centralised and distributed training paradigms, corresponding to the training regimes outlined in Section~\ref{ssec:estimation}.

\subsubsection{Centralised training}
In centralised training, global state and reward information is available, and parameter sharing across agents can be used \parencite{MARLsurvey}.
Actor--critic methods typically employ either a global critic using the full state or local critics based on each agent’s observation history~$\tau_i$ \parencite{MARLinternet_survey}.

For MARL problems on an IG, it is natural to estimate $(V_D(S))_i$ from the observation histories within an $m$-hop undirected neighbourhood $N_i^m$. This encourages cooperation among nearby agents while ignoring redundant distant information. Under homogeneous agents, an approximation $V_\phi(\tau_{N_i^m}) \approx (V_D(S))_i$ allows parameter sharing, enabling us to estimate the DVF for all agents using a graph neural network (GNN), i.e.,
$\mathrm{GNN}_\phi(\tau; \tilde{\mathds{A}}) \;\approx\; V_D(S)$,
where $\tilde{\mathds{A}}_{ij}=\max\{\mathds{A}_{ij},\mathds{A}_{ji}\}$ is the symmetrised IG.
The number of GNN message-passing layers determines the critic's receptive field, allowing it to interpolate smoothly between local and global critics \parencite{GNN_value, GNN_value2}.
This aligns well with the DVF, whose contributions decay with graph distance.
GNNs therefore provide a natural architecture for DVF estimation in environments with arbitrary agent counts, as they effectively capture short-range dependencies while avoiding the difficulties associated with long-range credit assignment \parencite{gnn_bottleneck}.

\subsubsection{Distributed training}
For the distributed DVF estimation scheme, each agent $i$ maintains local parameters $\phi_i$ and learns a scalar estimate of its DVF component,
$V_{\phi_i}(\tau_i^t) \approx (V_D(S^t))_i$, from local information $\tau_i^t$ and neighbour messages.

Using $\Gamma=\gamma \mathds{A}D^{-1}$, agent $i$ forms the local Bellman target
\begin{equation}
y_i^t :=
\bigl(\Gamma [R^t + V_{\phi}(\tau^{t+1})]\bigr)_i
=
\sum_{j \in \vec N_i}
\frac{\gamma \mathds{A}_{ij}}{d_j}
\Bigl(R_j^t + V_{\phi_j}(\tau_j^{t+1})\Bigr),
\label{eq:local_target_expanded}
\end{equation}
where $\vec N_i$ is the out-neighbourhood of agent $i$ and
$d_j = D_{jj} = \sum_k \mathds{A}_{kj}$ is the in-degree of $j$ used in $\Gamma$.
Agent $i$'s local TD error is the $i$-th component of the global vector TD error in Section~\ref{ssec:estimation},
\[
\delta_i^t = (\delta^t)_i = y_i^t - V_{\phi_i}(\tau_i^t).
\]
Minimising the global TD loss $n^{-1}\|\delta^t\|^2 = {n}^{-1}\sum_{i=1}^n (\delta_i^t)^2$
therefore decomposes into per-agent contributions $n^{-1}(\delta_i^t)^2$, which can be optimised independently by each agent.

To compute $y_i^t$, agent $i$ only requires the tuple
$\{(R_j^t, V_{\phi_j}(\tau_j^{t+1}), d_j)\}_{j \in \vec N_i}$ from its neighbours.
These quantities can be exchanged via local message passing, enabling fully decentralised critic learning without a central coordinator.
The per-step communication cost is $O(|\vec N_i|)$.

\subsection{DA2C training}\label{training}

We provide more details on the DA2C algorithm introduced in Section \ref{sec:da2c}. DA2C alternates between updating a DVF critic $V_{D_\phi}$ via TD learning and updating the policy parameters $\theta$ using diffusion advantages.
We consider $N$ policy iterations of length $M$.

\paragraph{Diffusion advantages.}
We compute diffusion advantages $\hat{G}^t_{D_\phi} \in \mathbb R^n$ using the $W$-step estimator \eqref{eq:diff_nstep} and use $(\hat{G}^t_{D_\phi})_i$ as the advantage signal for agent $i$.

\paragraph{Critic update.}
We write $\mathrm{sg}(\cdot)$ to denote the stop-gradient operator, i.e., $\mathrm{sg}(x)=x$ in the forward pass and $\nabla\,\mathrm{sg}(x)=0$.
The critic parameters $\phi$ are updated by minimising the TD loss $n^{-1}\|\delta^t\|^2$, 
where
\[
\delta^t = \Gamma\Bigl[R^t + \mathrm{sg}\!\bigl(V_{D_\phi}(S^{t+1})\bigr)\Bigr] - V_{D_\phi}(S^t),
\]
using a semi-gradient update. This yields the update
\begin{equation}\label{eq:critic_update}
\phi \gets \phi - c_v \nabla_\phi\left(n^{-1}\|\delta^t\|^2\right),
\end{equation}
where $c_v>0$ denotes the critic learning rate.

\paragraph{Actor update.}
Given diffusion advantages $\hat{G}^t_{D_\phi}$, 
we update the policy parameters $\theta$ by maximising a surrogate objective of the form
\begin{equation*}
J(\theta) \;=\; {M}^{-1}\sum_{t=0}^{M-1} J_t(\theta),
\end{equation*}
where the per-time-step surrogate objective is 
\begin{align}\label{eq:actor_gain}
\begin{split}
J_t(\theta) &\;=\; c_r \sum_{i\in\mathcal V}
\log \pi_{\theta_i}(A_i^t\mid\tau_i^t)\,(\hat{G}^t_{D_\phi})_i
\\&\qquad\;+\; c_h\, \sum_{i\in\mathcal V} H\!\left[\pi_{\theta_i}(\cdot\mid \tau_i^t)\right].
\end{split}
\end{align}
Here $H[\pi_{\theta_i}(\cdot\mid \tau^t_i)]$ denotes the  entropy of agent $i$'s policy and $c_h\ge 0$ is an entropy coefficient.
The first term in \eqref{eq:actor_gain} corresponds to a weighted policy-gradient objective with scaling $c_r >0$, where the diffusion advantage $(\hat{G}^t_{D_\phi})_i$ acts as the training signal for agent $i$. As with standard actor--critic methods, we treat $\hat{G}^t_{D_\phi}$ as constant with respect to $\theta$ when computing $\nabla_\theta J_t$.
The policy update is then given by
\begin{equation}\label{eq:actor_update}
\theta \;\leftarrow\; \theta + c_j \nabla_\theta  J(\theta),
\end{equation}
where $c_j>0$ is the actor learning rate.

\paragraph{Single-loop training.}
In contrast to standard actor--critic implementations which first collect roll-outs and then perform optimisation, DA2C computes the diffusion advantages and TD residuals online during roll-out.
This avoids storing roll-outs and is well suited to our settings where the state distribution and observation statistics may drift over time (e.g., due to random walks in environment parameters).
A limitation of this online approach is that we do not use an RNN-based critic, since we do not backpropagate through time over stored trajectories.
Algorithm~\ref{da2c_alg} summarises the procedure.

\paragraph{Implementation details.}
We use separate learning rates $c_v$ and $c_j$ for critic and actor updates in \eqref{eq:critic_update} and \eqref{eq:actor_update}, respectively.
We stop gradients through the bootstrap term $V_{D_\phi}(S^{t+1})$  when updating $\phi$.
Further application-dependent modifications to the surrogate gain (e.g., regularisers and constraints) are introduced in Section~\ref{architecture} for the LD-GNN network.

\begin{algorithm*}[ht]
\caption{Single-loop DA2C training}\label{da2c_alg}
\begin{algorithmic}
\STATE Initialise actor parameters $\theta$ and critic parameters $\phi$
\FOR{policy iteration $=1,\dots,N$}
    \STATE $J \gets 0$
    \STATE Reset environment, obtain initial state $S^0$ and set $\tau^0$
    \FOR{$t=0,\dots,M-1$}
        \STATE Execute actions $A^t \sim \pi_\theta(\cdot \mid\tau^t)$, observe $(S^{t+1}, R^t)$ and update $\tau^{t+1}$
        \STATE Compute diffusion advantage $\hat{G}^t_{D_\phi}$ using \eqref{eq:diff_nstep}
       \STATE $J_t \gets
c_r \sum_{i\in\mathcal V} \log \pi_{\theta_i}(A_i^t\mid\tau_i^t)\,(\hat{G}^t_{D_\phi})_i
+ c_h \sum_{i\in\mathcal V} H\!\left[\pi_{\theta_i}(\cdot\mid \tau^t_i)\right]$
        \STATE $J \gets J + J_t$
        \STATE $\delta^t \gets \Gamma\bigl[R^t + \mathrm{sg}\!\left(V_{D_\phi}(S^{t+1})\right)\bigr] - V_{D_\phi}(S^t)$
        \STATE $\phi \gets \phi - c_v \nabla_\phi\left(n^{-1}\|\delta^t\|^2\right)$ \hfill (semi-gradient; stop-grad on $V_{D_\phi}(S^{t+1})$)
    \ENDFOR
        \STATE $J\gets M^{-1} J$
            \STATE $\theta \gets \theta + c_j \nabla_\theta J$
\ENDFOR
\end{algorithmic}
\end{algorithm*}

\subsection{LD-GNN network architectures}\label{architecture}

We describe the neural network architectures used in our application tasks. Unless stated otherwise, all multilayer perceptrons (MLPs) have two hidden layers with ReLU activations. For the LD-GNN actor, the message-aggregation module is a graph attention network (GAT) with dynamic attention \parencite{gatv2}, which we use for all tasks. For the critic, we use feed-forward networks that take node and edge memory states as inputs (without parameter sharing with the actor). Although this is less expressive than a recurrent critic \parencite{RNNcritic}, it enables efficient policy evaluation updates and improves training speed in practice.

\paragraph{Vector graph colouring.}
We use node-memory dimension $\mathrm{dim}(X)=32$ and omit edge memories $E$. For the DA2C critic, we use a five-layer graph isomorphism network (GIN) \parencite{GIN} which takes node and edge features as input, and proceed as in Section~\ref{sec:sgnn}.

\paragraph{Transmit power control.}
We use $\mathrm{dim}(X)=\mathrm{dim}(E)=10$. For the DA2C critic, we estimate the diffusion value for each edge agent $(i,j)$ using
\[
(V_{D_\phi})_{ij} = \mathrm{MLP}^V_\phi(X_i, E_{ij}, X_j),
\]
as described in Appendix~\ref{gmdp_desc}. We also experimented with critics using wider neighbourhood context, but did not observe performance improvements. 

We train node outputs $Y$ in an unsupervised manner by including the environment reward $r$ in the surrogate gain, thereby providing a direct learning signal for $\psi_\theta$. The message-passing policy $\lambda_\theta$ is trained using a diffusion-gradient term. To avoid the degenerate local optimum $\lambda_\theta \equiv 0$, we add a message-passing bonus $c_m |C|$ during early training, where $|C|=\sum_{i\in\mathcal V} |C_i|$ counts the total number of messages passed. The coefficient $c_m$ is annealed towards $0$ over training iterations. This yields the surrogate gain
\begin{align*}
J(\theta) &= \mathbb{E}_{\pi_\theta}\!\left[
r
+ c_r \sum_{(i,j)\in\mathcal E} \log \lambda_\theta(C_{ij}^t \mid I_i^t, E_{ij}^t)\,(\hat{G}^t_{D_\phi})_{ij}\right.\\
&\qquad\qquad \left. + c_h \sum_{(i,j)\in\mathcal E} H\!\left[\lambda_\theta(\cdot \mid I_i^t, E_{ij}^t)\right]
+ c_m |C|
\right],
\end{align*}
estimated as an empirical average over $M$ time steps.

\paragraph{Optimisation details.}
We apply the DA2C algorithm \ref{da2c_alg} to the task-dependent surrogate gain. We use batch size $64$. For transmit power we use rollout length $M=5$ and $N=500$ training iterations; for graph colouring we use $M=10$ and $N=3000$. All models are trained using the Adam optimiser on a single GTX 1080 GPU. We tune the coefficients in the surrogate gain via grid search, and report the optimal values in Table~\ref{tab:gain_coeffs}, where $c_j$ and $c_v$ denote the learning rates \eqref{eq:actor_update}, \eqref{eq:critic_update} for $\theta$ and $\phi$, respectively, and $c_h, c_m$ and $c_r$ are scalings in the surrogate gain. Experiments are implemented in PyTorch, and GNN components use PyG \parencite{pyg}.

\begin{table}[ht]
    \centering
    \caption{Optimal learning rates and scalings for the LD-GNN network in the DA2C algorithm.}
    \label{tab:gain_coeffs}
    \vspace{2mm}
    \begin{tabular}{c|c c c c c} \toprule
         Task & $c_j$ & $c_v$ & $c_r$ & $c_h$ & $c_m$ \\ \midrule
         Transmit power & 0.004 & 0.002 & 0.1 & 1e-4 & 0.2 \\
         Graph colouring & 1 & 6e-4 & 5e-4 & 0 & 0 \\
         \bottomrule
    \end{tabular}
\end{table}

\section{Experimental details and additional results}

\subsection{Comparison to classical baselines}\label{sec:classical}

Classical distributed graph colouring (DGC) algorithms typically assume free communication and do not explicitly account for a message-passing penalty $p_m$, which is central to our setting.
To provide a representative hand-designed baseline, we consider a simple greedy local-update heuristic that trades off colouring quality with communication cost.

At each iteration $t$, we sample an active set of nodes $S^t \subset \{1,\ldots,n\}$ by including each node independently with probability $1/2$.
Each active node $i \in S^t$ then updates its decision $Y_i^t$ using only information available from its neighbourhood.
In our binary colouring formulation, this reduces to the threshold rule
\[
Y^t_i =
\begin{cases}
1, & \text{if }\, 2\,p_m \sum_{j\in N_i} Y^{t-1}_j < 1,\\
0, & \text{otherwise,}
\end{cases}
\]
which allocates each agent's colours to maximise the sum of itself and its neighbour's rewards. 

This greedy baseline yields reasonable solutions given sufficient iterations for all $p_m$ values considered, but is consistently outperformed by DA2C across the full penalty sweep (Figure~\ref{sweeps}).

We do not include an analogous greedy baseline for the transmit power optimisation tasks, as these involve continuous, strongly coupled control variables and non-separable reward terms, for which simple local threshold updates are ineffective.

\subsection{Barab\'asi--Albert graph colouring results}\label{app:barabasi}

We report additional graph colouring results on Barab\'asi--Albert (BA) graphs, a standard scale-free graph model often used to capture degree heterogeneity and hub structure in real-world networks \parencite{barabasi}.
Compared to Erd\H{o}s--R\'enyi (ER) graphs, BA graphs contain a small number of high-degree hub nodes, leading to shorter effective graph diameters and stronger coupling between agents.

Figure~\ref{bara_curves} shows training curves, and Figure~\ref{fig:bara_sweep} reports test performance across message-passing penalties $p_m$.
DA2C continues to achieve the best performance on BA graphs, although the performance gap to baselines is smaller than on ER graphs.
A plausible explanation is that hub nodes reduce the effective sparsity of the influence structure: because many agents are connected (directly or indirectly) through hubs, information propagates rapidly and neighbourhoods overlap heavily, making the task less locally decomposable.

\paragraph{Effect of hub structure on diffusion value factorisation.}
The diffusion value function exploits locality by weighting rewards according to both temporal distance and graph distance.
In BA graphs, hub nodes cause the diffusion neighbourhood $N_i^t$ of many agents to expand quickly with diffusion time $t$, reducing the sparsity that DA2C leverages for structured value estimation.
As a result, the advantage of a diffusion-based critic over global-critic baselines is attenuated, and methods that estimate global values from local information (e.g., MAA2C) exhibit performance closer to DA2C.
In the limiting case of a fully connected graph, diffusion neighbourhoods cover the entire system immediately and DA2C reduces to MAA2C.
Conversely, for graph families with slower neighbourhood growth---such as sparse ER graphs and many geometric graphs---DA2C benefits more strongly from sparsity, consistent with our empirical results.

\begin{figure}[ht]
    \centering
    \includegraphics[width=0.8\linewidth]{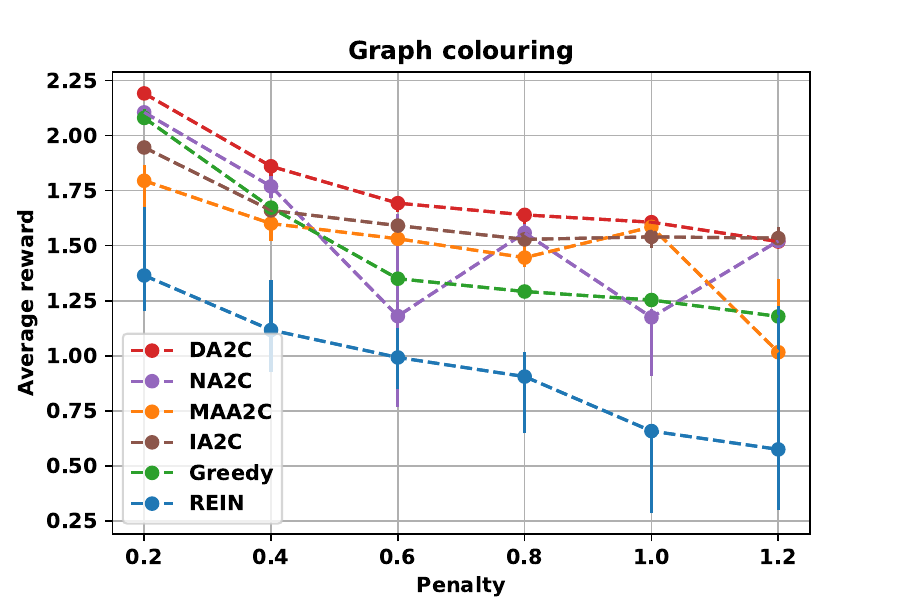}
    \caption{Test performance as a function of the message-passing penalty $p_m$ for Barab\'asi-Albert graph colouring. Error bars indicate the lower and upper quartiles across runs.}
    \label{fig:bara_sweep}
\end{figure}

\begin{figure*}[ht]
    \centering
        \begin{subfigure}[t]{0.35\linewidth}
        \centering
        \includegraphics[width=\linewidth]{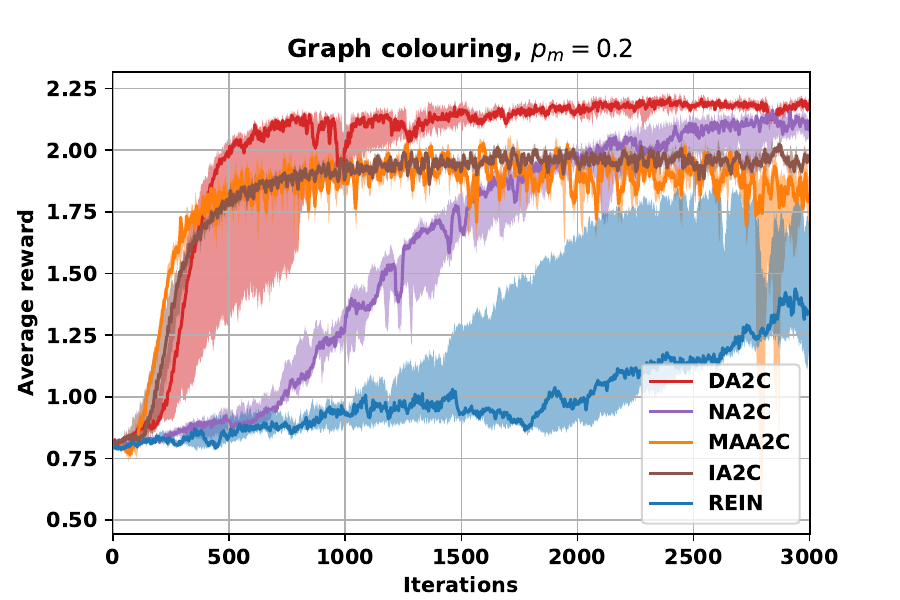}
    \end{subfigure}~\hspace{-0.5cm}
    \begin{subfigure}[t]{0.35\linewidth}
        \centering
        \includegraphics[width=\linewidth]{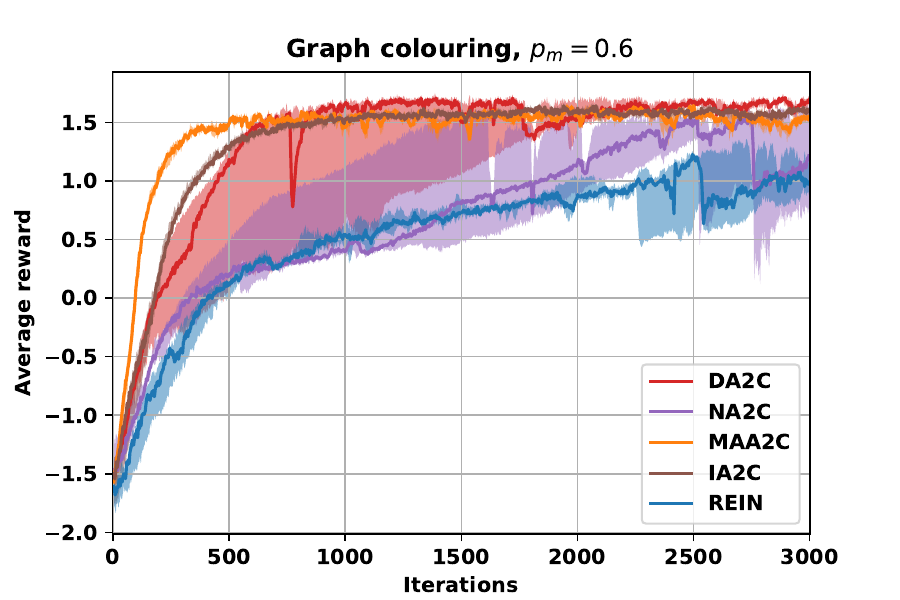}
    \end{subfigure}~\hspace{-0.5cm}
    \begin{subfigure}[t]{0.35\linewidth}
        \centering
        \includegraphics[width=\linewidth]{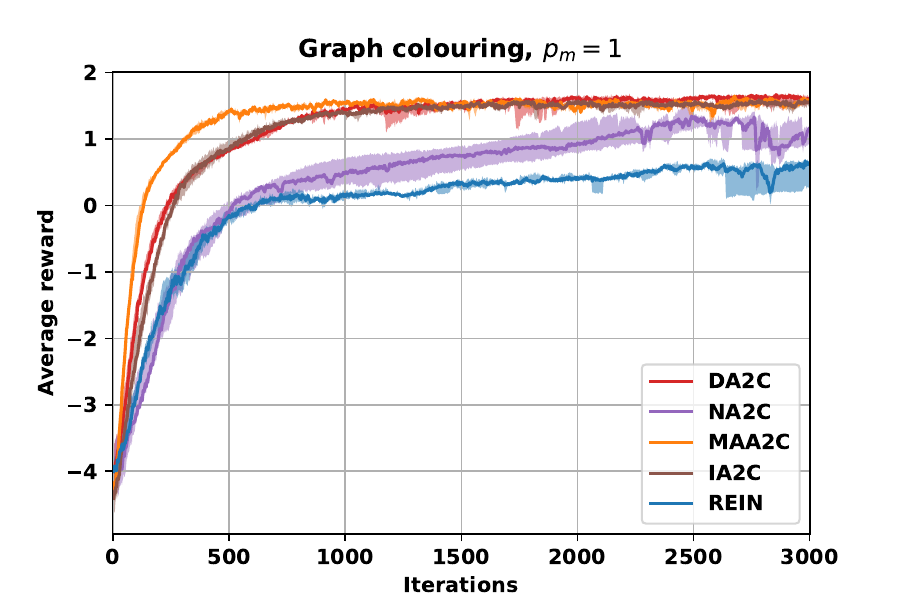}
    \end{subfigure}\hspace{-0.35cm}

    \caption{Training curves for different message-passing penalties $p_m$ on Barab\'asi-Albert graph colouring. Error bars indicate the lower and upper quartiles across runs. Curves are smoothed with a moving average for visual clarity.}
    \label{bara_curves}
\end{figure*}

\subsection{Out-of-distribution generalisation}\label{app:ood}

To evaluate generalisation beyond the training distribution, we consider out-of-distribution (OOD) settings in which policies are tested on graph families and sizes not seen during training.
Note that, in all experiments, training and evaluation already use independently sampled random graph instances, so test graphs are unseen even in the in-distribution setting.

We assess OOD generalisation on the graph colouring task.
All models are trained on Erd\H{o}s--R\'enyi (ER) graphs with $n=5{,}000$ nodes, and tested on (i) larger ER graphs with $n=10{,}000$ nodes and (ii) Barab\'asi--Albert (BA) graphs with $n=1{,}000$ nodes.
Figure~\ref{fig:out-of-dist} reports test performance across message-passing penalties $p_m$.

Across both OOD settings, DA2C generalises robustly: it achieves the best performance across the full penalty sweep on the larger ER graphs, and remains top-performing on BA graphs for all but the largest penalty value. At the largest penalty, performance differences narrow and all methods approach the no-communication regime.

The performance on the larger $10{,}000$-node ER graphs closely matches that on the training distribution, indicating robust scaling to larger graph sizes.
Interestingly, performance on BA graphs is even higher than on ER graphs, and exceeds the performance obtained when training directly on BA graphs (cf.\ Figure~\ref{fig:bara_sweep}).
A possible explanation is that ER training provides a more homogeneous and stable learning distribution, acting as an implicit regulariser and reducing overfitting to high-degree hubs.
In contrast, training on BA graphs can be dominated by hub nodes, which may increase gradient variance and encourage policies that specialise to hub-centric interaction patterns.
The resulting ER-trained policies may therefore transfer better to BA graphs, where hubs reduce the effective graph diameter and facilitate rapid information propagation at test time.

\begin{figure}[t!]
    \centering
    \begin{subfigure}{0.8\linewidth}
    \includegraphics[width=\linewidth]{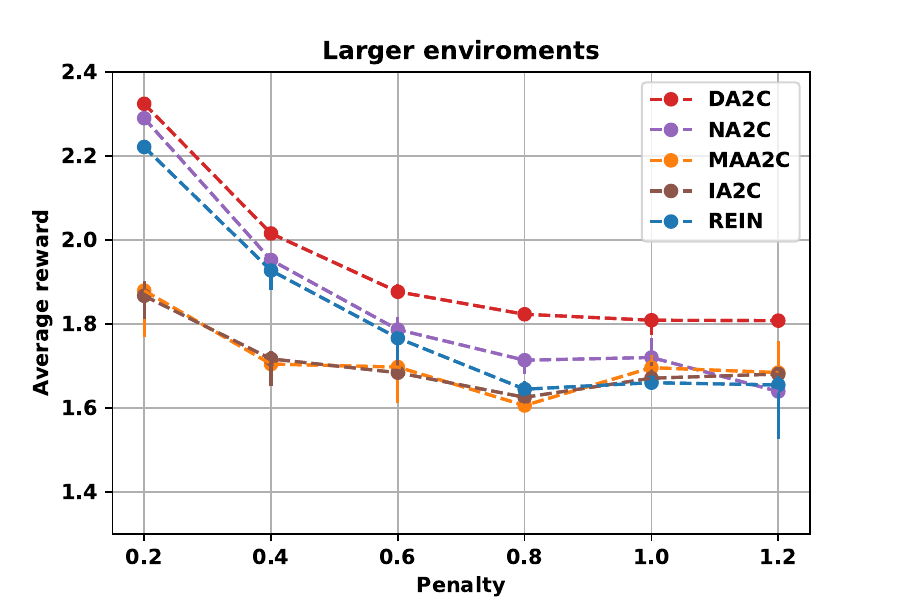}
    \end{subfigure}
    \begin{subfigure}{0.8\linewidth}
    \includegraphics[width=\linewidth]{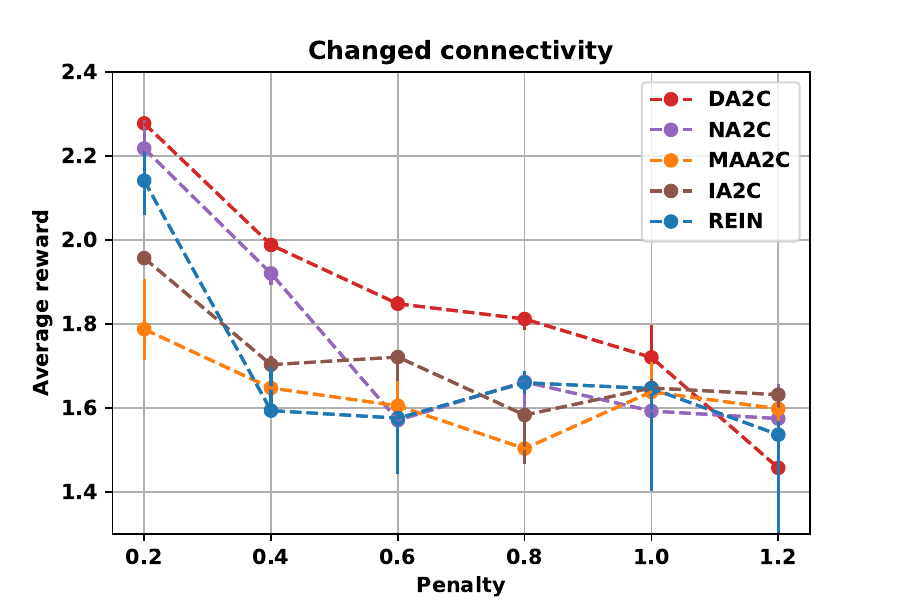}
    \end{subfigure}
    \caption{OOD generalisation on graph colouring as a function of message-passing penalty $p_m$. Models are trained on ER graphs ($n=5{,}000$) and evaluated on larger ER graphs ($n=10{,}000$) and BA graphs ($n=1{,}000$). Error bars indicate lower and upper quartiles.}
    \label{fig:out-of-dist}
\end{figure}

\subsection{Firefighting training curves}\label{app:fire_train}
For completeness, Figure~\ref{fig:fire_train} shows training curves for the firefighting task, illustrating convergence behaviour and variability across seeds for DA2C and the baselines.
DA2C achieves the highest returns after several hundred training iterations and exhibits stable convergence.

\begin{figure}[h]
    \centering
    \includegraphics[width=0.8\linewidth]{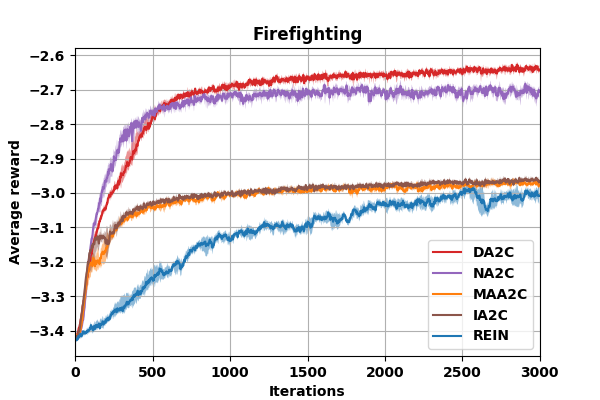}
    \caption{Training curves on the firefighting task. Shaded regions indicate the lower and upper quartiles across runs.}
    \label{fig:fire_train}
\end{figure}

\end{document}